\documentclass[12pt]{article}
\usepackage[table]{xcolor} 
\usepackage{colortbl} 
\usepackage{amsfonts}
\usepackage{amssymb}
\usepackage{amsmath}
\usepackage{amsthm}
\usepackage{graphicx}
\usepackage{dsfont}
\usepackage{subcaption}
\usepackage{array,multirow}
\usepackage{hyperref}
\usepackage{lineno}
\usepackage{enumitem}
\usepackage{geometry}
\usepackage{algorithm}  
\usepackage{algorithmic}  
\usepackage{etoolbox}
\usepackage{mathtools}
\usepackage{mathtools}
\usepackage{bm}
\usepackage{makecell}
\usepackage{tikz}
\usepackage{longtable}
\usepackage{multirow}
\usepackage{colortbl}
\usepackage{xcolor}
\usepackage{array}
\usepackage[square,numbers]{natbib}
\usepackage{longtable}
 \usepackage{tabularx}
\usepackage{array}
\usepackage{multirow}
\usepackage{makecell}
 \usepackage{amsthm}
 
 \newcommand{\comb}[1]{\textcolor{black}{#1}}

 \newcommand{\combl}[1]{\textcolor{black}{#1}}
  \newcommand{\comblu}[1]{\textcolor{black}{#1}}

\theoremstyle{definition}
\newtheorem{example}{Example}
 \newtheorem{theorem}{Theorem}

\begin{document}
\begin{center}
		{\Large ReLU Networks for Exact Generation of Similar Graphs}\bigskip\\
Mamoona Ghafoor
~and
 Tatsuya Akutsu


  {Bioinformatics Center, Institute for Chemical Research, Department of Intelligence Sciences and Technology, 
Kyoto University, Uji 611-0011, Japan }

{Email: mamoona.ghafoor@kuicr.kyoto-u.ac.jp; takutsu@kuicr.kyoto-u.ac.jp
}\\
		\today	
	\end{center}
\begin{abstract}
Generation of graphs constrained by a specified graph edit distance from a source graph is important in applications such as cheminformatics, network anomaly synthesis, and structured data augmentation. Despite the growing demand for such constrained generative models in areas including molecule design and network perturbation analysis, the neural architectures required to provably generate graphs within a bounded graph edit distance remain largely unexplored. In addition, existing graph generative models are predominantly data-driven and depend heavily on the availability and quality of training data, which may result in generated graphs that do not satisfy the desired edit distance constraints.
In this paper, we address these challenges by theoretically characterizing ReLU neural networks capable of generating graphs within a prescribed graph edit distance from a given graph. In particular, we show the existence of constant depth and 
$\mathcal{O}(n^2 d)$ size ReLU networks that deterministically generate graphs within edit distance $d$ from a given input graph with $n$ vertices, thereby eliminating the reliance on training data and guaranteeing the validity of the generated graphs.
Experimental evaluations demonstrate that the proposed network successfully generates valid graphs for instances with up to 1400 vertices and edit distance bounds up to 140, whereas the baseline generative models GraphRNN and GraphGDP fail to generate any graph with the desired graph edit distance. 
These results, supported by experiments demonstrating both scalability and exactness of the proposed networks, provide a theoretical foundation for constructing compact generative models with guaranteed validity, offering a new paradigm for graph generation that moves beyond probabilistic sampling toward exact synthesis under similarity constraints.
An implementation of the proposed networks is available at~\url{https://github.com/MGANN-KU/GraphGen\_ReLUNetworks}.
\end{abstract}
\noindent
{\bf Keywords:}
{Generative networks; ReLU function; graphs; graph edit distance; label matrix; adjacency matrix}

\section{Introduction}
Generative networks, a powerful class of machine learning algorithms, excel at capturing the underlying patterns and distributions of training data. This fundamental ability to model complex data structures allows them to synthesize new, realistic samples, making them invaluable beyond data generation for tasks like imputation and representation learning~\cite{WC2019, VD2016, AN2022}. These models have seen widespread adoption across numerous fields. Their applications range from natural language processing and data augmentation to DNA sequence synthesis and drug discovery~\cite{KN2017, TB2013, BA2017, GE2022}. The impact on bioinformatics is particularly notable, where they are employed for molecule generation, motif discovery, drug discovery, cancer research, secondary structure prediction, and single-cell RNA sequencing analysis~\cite{ZJ2014, LJ2018, SK2020, GC2020}.

Generative models comprise a diverse set of architectures. Autoencoders like VAEs~\cite{KD2019} and DAEs~\cite{BY2013} learn compressed data representations through encoding and decoding. Generative adversarial networks (GANs)~\cite{GI2020}, including deep convolutional GAN~\cite{GF2018}, use a competitive generator-discriminator framework to create realistic data. Probabilistic models like deep Boltzmann machines capture complex data distributions~\cite{HG2009}, while autoregressive models (e.g., PixelCNN~\cite{VO2016}, PixelRNN~\cite{VD2016b}) generate data sequentially. The deep recurrent attentive writer (DRAW) model~\cite{GK2015} integrates recurrent neural networks with attention mechanisms to generate images by focusing on specific regions of the data during the generation process.
Recently, diffusion models have become prominent. They work by learning to reverse a gradual noising process; starting from pure noise, a neural network is trained to iteratively denoise the data to generate coherent samples~\cite{HJ2020, YL2023}.

The selection of a machine learning model's function family and architecture is a critical yet non-trivial trade-off. An overly broad family risks overfitting and high computational cost, while an overly constrained one may lack predictive accuracy, with no universal solution for all problems~\cite{KA2022}. Although the universal approximation theorem guarantees that even shallow networks can approximate complex functions, this often necessitates an infeasibly large number of nodes~\cite{HSW1989}. Research has thus shifted to the efficiency of different architectures, showing that depth exponentially enhances representational power for certain functions~\cite{MPCB2014, RPKGS2017, T2015}. For example, deep networks can model periodic functions more effectively, with specific width-depth trade-offs established~\cite{SM2014, CNPW2019}.

The choice of activation function also determines expressive capacity. Networks with piecewise linear activations, for instance, do not exponentially increase their complexity~\cite{BR2019}. Furthermore, studies have proven that neural networks with various activations (sigmoidal, tanh, ReLU) can approximate other model classes like decision trees and random forests~\cite{KA2022, YDS2010, BSW2019}. Recent work has even demonstrated the existence of compact generative networks with ReLU activations capable of producing strings within a specific edit distance~\cite{MT2024}.
Graph edit distance (GED) introduced by Sanfeliu and Fu~\cite{Sanfeliu2012} in 1983 emerged as an important tool for measuring similarity between graphs, extending the notion of string and tree edit distances to general graph structures. Since then, GED has found widespread applications in pattern recognition, computer vision, bioinformatics, chemoinformatics, and network analysis~\cite{Gao2010Full, Ibragimov2013, Riesen2015}. The exact computation of GED was later shown to be an NP-hard problem \cite{Zeng2009}, which motivated extensive research on approximate solutions. Early approaches relied on exact exponential-time algorithms or heuristic methods. To improve efficiency, several works reformulated GED as a quadratic assignment problem enabling the use of combinatorial optimization techniques and approximate solvers~\cite{Bougleux2015}. More recently, approximate and learning-based methods, including continuous relaxations and neural network approaches, have been proposed to scale GED computation to larger graphs while maintaining reasonable accuracy \cite{Bai2020, Bai2018}.

Recent advances in structured generative networks have produced powerful models that balance empirical performance with structural validity. Autoregressive approaches, such as GraphRNN~\cite{You2018GraphRNN} and its variants~\cite{Wang2025LearningOrder}, sequentially construct graphs, with some learning dynamic generation orders for state-of-the-art molecular design, while others like AutoGraph~\cite{Chen2025AutoGraph} leverage transformers to frame graph generation as a sequence modeling task. 
In parallel, diffusion models have emerged as a dominant paradigm. 
Frameworks like GraphGDP~\cite{Huang2022GraphGDP} and BetaDiff~\cite{Liu2025BetaDiff} generate permutation-invariant graphs and jointly model discrete and continuous graph attributes. 
A key innovation is the enforcement of hard constraints, as seen in models that use specialized noise mechanisms to guarantee properties like planarity and acyclicity throughout the diffusion process~\cite{Madeira2024StructGraph}.

Despite these empirical successes, a fundamental limitation persists. Current models provide probabilistic guarantees that are inherently dependent on the quality and coverage of their training data. Consequently, they are unable to perform exact enumeration of the underlying combinatorial space or offer comprehensive guarantees of complete structural validity and coverage~\cite{Verma2025, Bommakanti2024}. This data-driven paradigm leaves a critical gap for applications requiring rigorous, rather than probabilistic, certainty over the space of generated structures. 
Recently, Ghafoor and Akutsu~\cite{MT2024, MT2026} studied the existence of ReLU generative networks to generate similar strings and trees with the desired string and tree edit distance, respectively. 

In this paper, we propose a novel framework for the exact generation of vertex-labeled graphs within a specified graph edit distance from a given graph. We theoretically establish the existence of constant-depth ReLU networks capable of generating graphs that satisfy the desired edit distance constraints, thereby providing formal guarantees that are typically absent in data-driven generative models. We further demonstrate the practical applicability of the proposed approach through extensive experiments on graphs with up to 1400 vertices, and compare it with the baseline generative models GraphRNN~\cite{You2018GraphRNN} and GraphGDP~\cite{Huang2022GraphGDP}.
An implementation of the proposed networks is available at~\url{https://github.com/MGANN-KU/GraphGen\_ReLUNetworks}.

The paper is organized as follows: 
Preliminaries are discussed in Section~\ref{sec:pre}. 
The existence of ReLU networks to generate any graph with graph edit distance at most $d$ due to substitution (resp., deletion, insertion) operations is discussed in Section~\ref{sec:GS} (resp., Section~\ref{sec:GD}, Section~\ref{sec:GI}).  
The generation of any graph with graph edit distance at most $d$ due 
to simultaneous application of deletion, substitution, and insertion operations by using  a ReLU network is discussed in Section~\ref{sec:GED}. 
Computational experiments are discussed in Section~\ref{sec:exp}.
A conclusion and future directions are given in Section~\ref{sec:concl}. 

\section{Preliminaries}\label{sec:pre}
Let $G$ be a vertex-labeled graph with $n$ vertices and labels from  the symbol set 
$\Sigma = \{1, 2, \ldots, m\}$,  
and let $v_1 , v_2,\ldots,v_n$ be an arbitrary sequence of vertices of $G$.
We represent the graph $G$ by using a column matrix $L(G) = (\ell_i)$, called label matrix such that $\ell_i$ is the label of the vertex $v_i$, and an $n \times n$ adjacency matrix $A(G) = (a_{ik})$ such that $a_{ik} = 1$ (resp., 0) if there is an edge between vertices $v_i$ and $v_k$ (resp., otherwise) as shown in Fig.~\ref{fig:G_r}(a) and (b).
When the underlying graph is fixed, then we simply use the notation $L$ and $A$. 
Three edit operations, deletion, substitution, and insertion, can be performed on a vertex-labeled graph.
There are two types of deletion operations: vertex deletion, which removes an isolated vertex, and edge deletion, which removes an edge, as illustrated in Fig.~\ref{fig:G_r}(c).
The substitution operation changes the label of a vertex, as shown in Fig.~\ref{fig:G_r}(d).
Similarly, there are two types of insertion operations: vertex insertion, which adds an isolated vertex, and edge insertion, which adds a new edge, as illustrated in Fig.~\ref{fig:G_r}(e).
Observe that the
\begin{enumerate}
\item deletion operation on a vertex $v_i$ of $G$ can be viewed as the deletion operation at the $i$-th entry of $L$ and $i$-th row and column of $A$, whereas the deletion of an edge between the vertices $v_i$ and $v_k$ can be viewed as simply replacing $1$ by $0$ at the $(i,k)$-th entry of $A$ as illustrated in Fig.~\ref{fig:G_r}(f),
\item substitution operation on a vertex $v_i$ of $G$ can be viewed as the substitution operation at the $i$-th entry of $L$, as illustrated in Fig.~\ref{fig:G_r}(e),
\item insertion operation of a vertex can be viewed as the insertion of a new entry at the end of $L$ and by adding a new row and column with all entries $0$ at the end of $A$, whereas the insertion of an edge between the vertices $v_i$ and $v_k$ is simply replacing $0$ by $1$ at the $(i,k)$-th entry of $A$ as illustrated in Fig.~\ref{fig:G_r}(g).
\end{enumerate}
The graph edit distance between  two vertex-labeled graphs $G$ and $H$ is defined to be $\mathrm{GED}(G, H) = \min_{\tau \in \mathcal{T}(G, H)} \sum_{e \in \tau} c(e)$ where $\mathcal{T}(G, H)$ denotes the set of edit paths transforming $G$ into $H$, and $c(e)$ denotes the cost of each edit operation $e$~\cite{RB2009}.
To ensure that each path of edit operations results in a valid graph without dangling edges, a vertex may be deleted only after all its incident edges have been removed and an edge may be inserted only if its endpoint vertices already exist or have been inserted beforehand~\cite{BS2017}.
Throughout this paper, we use a random sequence $x_1, x_2, \ldots, x_k$, 
$k \geq 2$, with integers $x_j \in [1, n]$, unless stated otherwise, to indicate the indices of the vertices $v_1 , v_2, \ldots, v_n$ in a graph.

For any real numbers $p, q$ and $\theta $, the ReLU function is defined as ${\rm ReLU}(p)=\max(0,p)$, the function $\delta$ is defined as $\delta(p,q)=1$ if $p=q$, and $\delta(p,q)=0$ otherwise; the threshold function $[~]$ is defined as $[p \geqslant \theta]=1$ for $p \geqslant \theta$, and $[p \geqslant \theta]=0$ otherwise; and the Heaviside function $H$ is defined as $H(p)=1$ when $p \geqslant 0$, and $H(p)=0$ otherwise. \combl{For the logical AND operation $a \land b$ between two binary variables $a, b\in \{0, 1\}$, it holds that $a \land b ={\rm ReLU}(a+b-1)$. }
\begin{figure}[H]
	\centering
	\includegraphics[scale = 0.6]{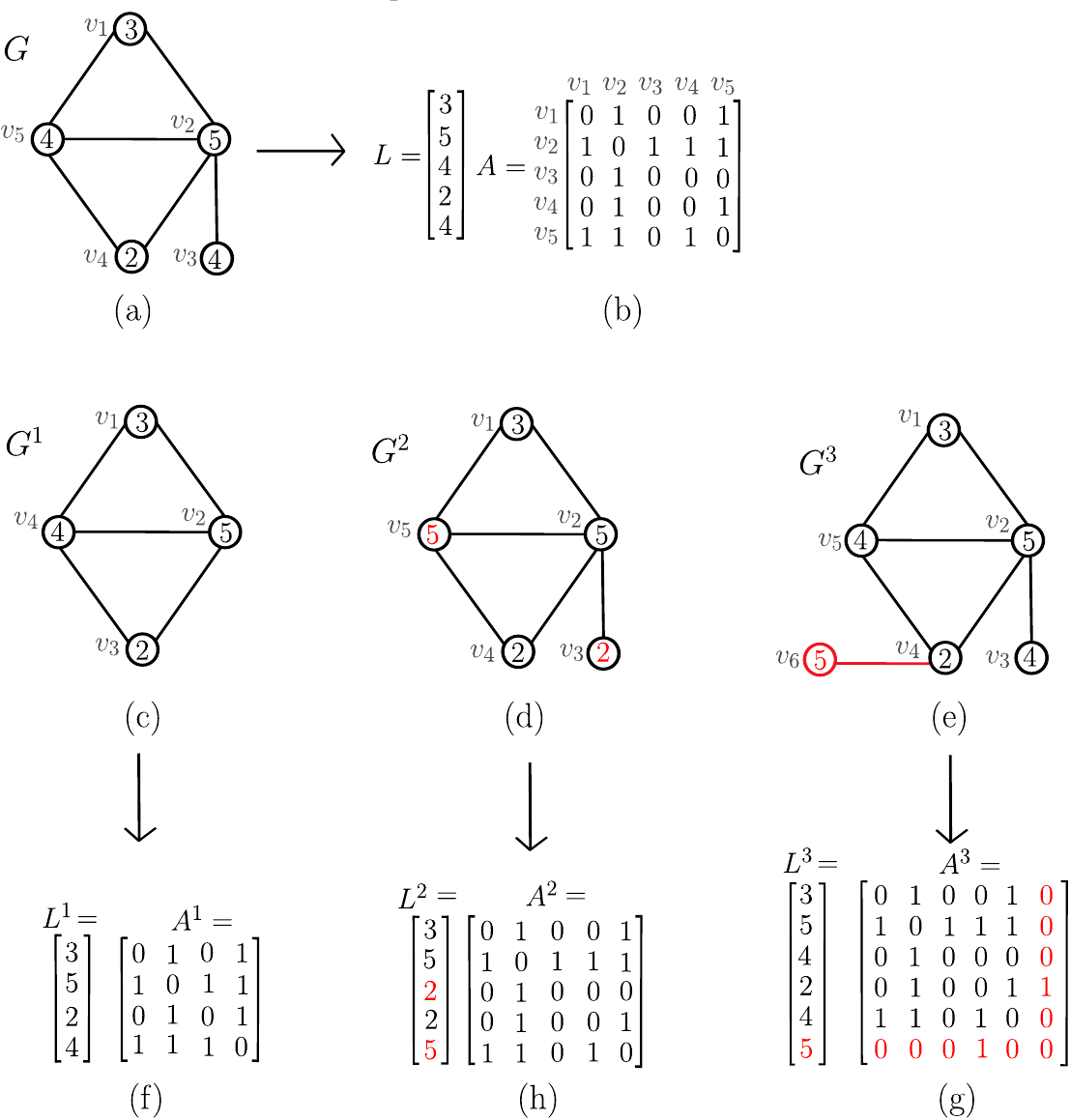}
	\caption{(a) A vertex-labeled graph $G$ with five vertices, an arbitrary sequence  $v_1 , v_2, v_3, v_4,v_5$ of vertices and labels from $\Sigma = \{1, 2,3, 4, 5\}$. 
	(b) The representation of $G$ with the label matrix $L$ and the adjacency matrix $A$. 
	(c) A graph $G^1$ obtained from $G$ after deleting the edge incident to $v_2$ and $v_3$ and then deleting the vertex $v_3$. 
	(d) A graph $G^2$ obtained from $G$ after substituting the label of the vertices  $v_3$ and $v_5$. 
	(e) A graph $G^3$ obtained from $G$ after inserting a vertex $v_6$ with label $5$ and then an edge connecting the vertices  $v_4$ and $v_6$. 
	(f), (g) and (h) are the matrix representations of the graphs obtained in (c), (d) and (e), respectively. Observe that $\text{GED}(G,G^1)=\text{GED}(G,G^2)=\text{GED}(G,G^3)=2$. }\label{fig:G_r}
\end{figure}

\section{GS$_d$-generative ReLU}\label{sec:GS} 
For a vertex-labeled graph $G$ with $n$ vertices, a label set $\Sigma = \{1, 2, \ldots, m\}$ with an arbitrary vertex sequence $v_1, v_2, \ldots, v_n$, 
and a non-negative integer $d$,
we define a {\em GS$_d$-generative ReLU} to be a 
ReLU neural network that generates any graph over $\Sigma$ whose graph edit distance is at most $d$ from $G$ due to the substitution of labels of the vertices indicated by a random sequence $x = x_1, x_2,  \ldots, x_{2d}$, where $ x_j \in \{1, \ldots, n\}$ indicates the index of the vertices and $x_{j+d} \in  \Sigma$
indicates the new label of the vertex $v_{x_j}$. 
Note that the adjacency matrices of $G$ and $G'$ will be the same. 
The existence of the such network is discussed in Theorem~\ref{thm:Gsnn}. 
\begin{theorem}
\label{thm:Gsnn}
For a vertex-labeled graph $G$ with $n$ vertices over 
$\Sigma = \{1, 2, \ldots, m\}$, and a non-negative integer $d$, 
there exists a GS$_d$-generative ${\rm ReLU}$ network with size 
$\mathcal{O}(n^2d)$ and constant depth.
\end{theorem}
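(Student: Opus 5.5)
We construct the network explicitly. Its input is the label matrix $L=(\ell_i)$ of $G$, the adjacency matrix $A$, and the random sequence $x=x_1,\dots,x_{2d}$. Its output is a label matrix $L'=(\ell'_i)$ together with $A'=A$. The adjacency part is copied unchanged through identity units, $a_{ik}={\rm ReLU}(a_{ik})$ since $a_{ik}\ge 0$. This costs $n^2$ units per layer. With constant depth the total is $\mathcal{O}(n^2)$, which already fits inside the claimed $\mathcal{O}(n^2d)$ bound.

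\textbf{Step 1: equality indicators.} For each vertex index $i\in\{1,\dots,n\}$ and each $j\in\{1,\dots,d\}$ we compute $\delta(x_j,i)$ with ReLU units. Because $x_j$ and $i$ are integers,
\[
\delta(x_j,i)={\rm ReLU}(x_j-i+1)-2\,{\rm ReLU}(x_j-i)+{\rm ReLU}(x_j-i-1).
\]
This uses a constant number of units per pair, so $\mathcal{O}(nd)$ units in total and depth $2$.

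\textbf{Step 2: resolving repeated indices.} Several $x_j$ may coincide. To make the output well defined, we let the last occurrence win. Define
\[
s_{ij}=\delta(x_j,i)\land\neg\Big(\bigvee_{j'>j}\delta(x_{j'},i)\Big).
\]
Let $c_{ij}=\sum_{j'>j}\delta(x_{j'},i)$, an integer in $[0,d]$. Then
\[
s_{ij}={\rm ReLU}\big(\delta(x_j,i)-c_{ij}\big),
\]
which is $1$ exactly when $x_j=i$ and no later index equals $i$. Computing the sums $c_{ij}$ naively takes $\mathcal{O}(nd^2)$ weights. We can reduce this with suffix sums, or simply accept it, since $d\le n$ may be assumed: at most $n$ distinct substitutions matter. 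Alternatively, if the convention allows, every occurrence can be applied additively and the bound checked separately. Handling these collisions is the main obstacle, because a naive sum of contributions would produce invalid labels.

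\textbf{Step 3: selecting the new label.} Let $u_i=\sum_j s_{ij}\in\{0,1\}$. We want $\ell'_i=(1-u_i)\ell_i+\sum_j s_{ij}x_{j+d}$, but the products must be realized with ReLUs. Since labels lie in $[0,m]$ and $s_{ij}$ is binary, we use the gating identity
\[
s\cdot y={\rm ReLU}\big(y-m(1-s)\big)\qquad (0\le y\le m,\ s\in\{0,1\}).
\]
This gives
\[
\ell'_i={\rm ReLU}\big(\ell_i-m\,u_i\big)+\sum_{j}{\rm ReLU}\big(x_{j+d}-m(1-s_{ij})\big).
\]
It costs $\mathcal{O}(d)$ units per vertex and constant extra depth.

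\textbf{Correctness and coverage.} At most $d$ labels change, each change has cost $1$, and $A'=A$, so $\mathrm{GED}(G,G')\le d$. Conversely, let $G'$ be any graph obtained by at most $d$ substitutions. List the changed vertices and their new labels as $x$, and pad the remaining entries by repeating a substituted vertex with its final label, or by "substituting" a vertex with its own label. This shows every such $G'$ is generated.

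\textbf{Size and depth.} Summing the layers gives constant depth. The size is $\mathcal{O}(nd)+\mathcal{O}(nd^2)+\mathcal{O}(n^2)$, which is $\mathcal{O}(n^2d)$ using $d\le n$.
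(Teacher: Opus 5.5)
Your proof is correct and is essentially the paper's construction: discard repeated indices, suppress the original label of every substituted vertex and gate in its new label with a large constant, then add the two parts (the paper's $f_i$, $g_i$ and $\ell'_i=f_i+g_i$). The differences are cosmetic. You keep the last occurrence of a repeated index via $s_{ij}$, whereas the paper's $e_j$ keeps the first; you use the tight gating constant $m$ instead of $C\gg\max(m,n)$; and you write out the ReLU form of $\delta$ rather than citing it. One aside does not hold: a sequential suffix-sum recurrence would cost depth $d$. You do not need it, since the $\mathcal{O}(nd^2)$ weights fit in $\mathcal{O}(n^2d)$ once $d\le n$, and that reduction is legitimate because at most $n$ substitutions are ever needed.
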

\begin{proof}
Suppose $G$ and $G'$ are two vertex-labeled graphs with $n$ vertices over $\Sigma$, with $L=(\ell_i)$ and $L' = (\ell'_i)$ label columns, resp., such that 
$G'$ can be obtained from $G$ with appropriate substitution operations defined by 
a sequence $x = x_1, x_2,  \ldots, x_{2d}$. 
We demonstrate that the process of obtaining $G'$ from $G$ with substitution operations can be simulated with the ReLU function using the following system of equations, where $C \gg  \max(m,n)$, ${j} \in \{1, 2, \ldots, d\}$ and $i \in \{1, 2, \ldots, n\}$.
\begin{align}
e_{j} &= \max \Big( x_j -  C \cdot \sum_{k=1}^{j-1}( \delta(x_j, x_k), 0 ) \Big),   \label{eqe1}\\
f_{i} &= \max \Big( {\ell}_i -  C \cdot \sum_{j=1}^{d}( \delta(e_j, i), 0)  \Big),   \label{eqf1}\\
g_{i} &= \sum_{j=1}^{d} \Big( \max \big(x_{j+d} - C (1- \delta(e_{j}, i) ), 0\big) \Big),   \label{eqg1}\\
{\ell}'_{i} &= f_{i} + g_{i}.   \label{eql1}
\end{align}
The variable $e_j$ ignores repetition in the sequence 
$x_1, x_2, \ldots, x_d$. 
$f_i$ stores the values $\ell_i$ that remain unchanged after substitution operations. 
The variable $g_i$ stores the values that should be substituted.
Finally, the required label column $L' = ({\ell}'_i)$ is obtained by adding $f_i$ and $g_i$, since exactly one of them is non-zero.

The maximum and $\delta$ functions can be simulated using the ReLU activation function, as shown in~\cite[Propositions~1 and~2]{MT2024}. 
As a result, there exists a $GS_d$-generative ${\rm ReLU}$ network of size 
$\mathcal{O}(n^2d)$ and constant depth.
\end{proof}
\begin{example}
\label{exa:sub}
Consider the graph $G$ given in Fig.~\ref{fig:G_r}, $d=3$ and $x=5, 3, 3, 5, 2, 3$, 
\combl{where the first three ($d$) entries correspond to the indices, and the next three entries correspond to the new labels. In Fig.~\ref{fig:G_r}, $x'$ is obtained by removing repeated indices from $x$; for example, index $3$ appears in positions $2$ and $3$, and therefore $x'_3 = 0$ in $x'$ as depicted in red. 
Matrix $F$ stores the labels of the vertices which will remain unchanged in the resultant graph.
$F$ is obtained from $L$ by setting to zero the labels corresponding to the indices that appear in $x'$; for example, $f_3 = 0$ since $3$ appears in $x'_2$. 
All such zeros in $F$ are depicted in red in Fig.~\ref{fig:G_r}.  
The matrix $G$ is obtained by keeping the new labels corresponding to the indices that appear in $x'$ as shown in red. 
Finally, $L'$ is the resulting label matrix after substitution, obtained by adding $F$ and $G$, where the new labels $2$ and $5$ are assigned to the vertices with indices $3$ and $5$, respectively.}
The resultant graph $G'$  obtained by applying the substitution operations on $G$ due to the given $x$ is shown in Fig.~\ref{fig:Sub_N}.
We demonstrate the process of obtaining $L'$ for $G'$ by using Eqs.~(\ref{eqe1})-~(\ref{eql1}) as follows. 
\begin{figure}[H]
	\centering
	\includegraphics[scale = 0.55]{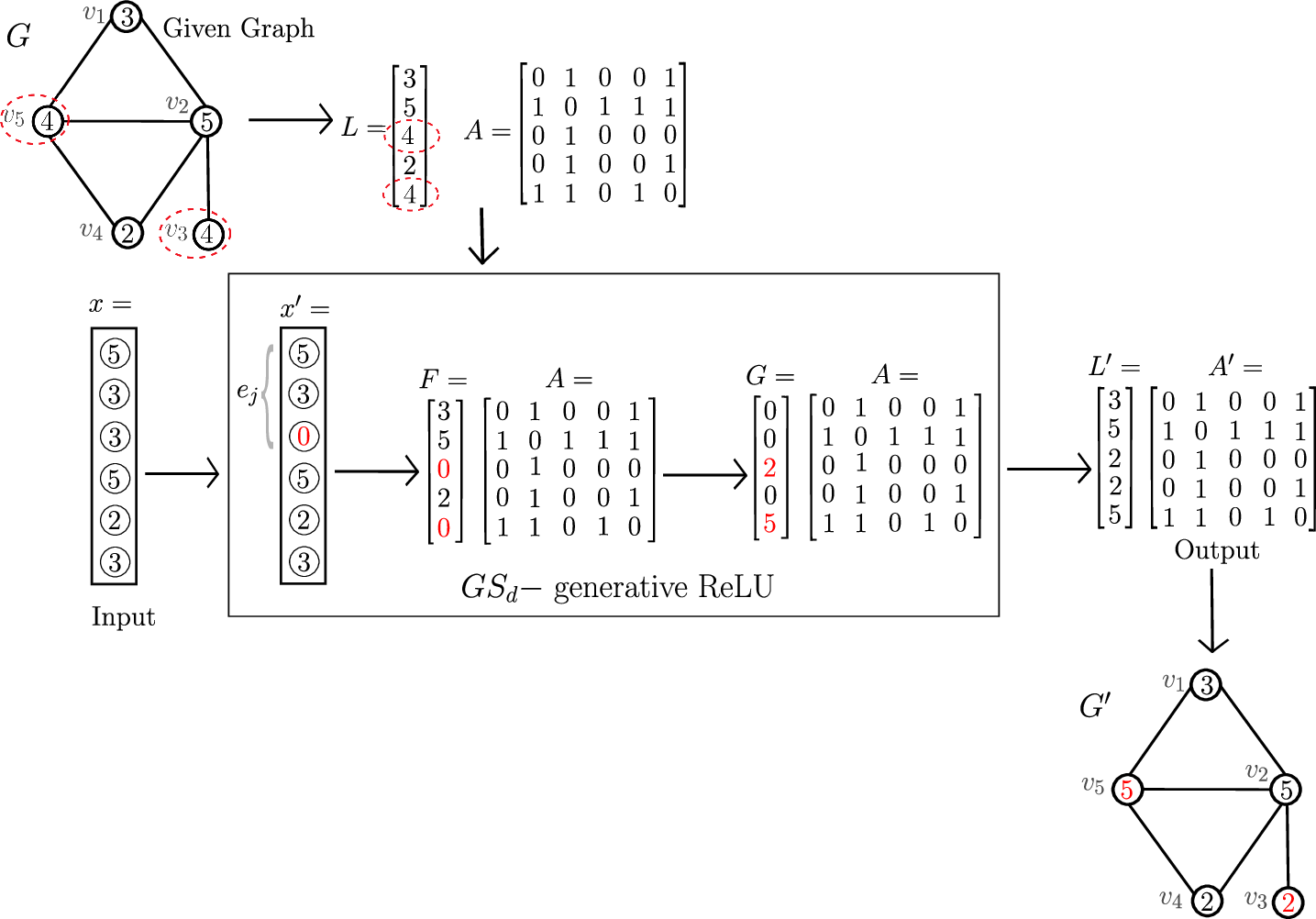}
	\caption{An illustration of simulation of GS$_d$-generative ReLU network for a graph $G$, $d=3$ and the input layer 
$x=5, 3, 3, 5, 2, 3$.  
The graph $G$ is represented by a column vector of labels $L$ and an adjacency matrix $A$, while the output graph $G'$ is generated as a column vector of labels $L'$ and an adjacency matrix $A'$. 
}\label{fig:Sub_N}
\end{figure}
\begin{longtable}{c p{14cm}}
$x_j$ & Indicates vertex index when $j \leq d$, and labels to be substituted otherwise (when $j \geq d+1$)\\
$e_j$ & Eliminates repeated entries from $x$ for $j \in \{1, \dots, d\}$ by setting duplicate values to $0$. 
In this example, $e_3=0$, while $e_1=x_1$ and $e_2=x_2$.\\
$f_{i}$  & Stores the labels that will not be changed, i.e., \combl{ $f_i = 0$ if index $i$ appears in $e$, otherwise $f_i = \ell_i$}. In this case, $F = (f_i)=[3, 5, 0, 2, 0]$, as shown in Fig.~\ref{fig:Sub_N}, \combl{where 
$f_1 = 3$ (resp., $f_3 = 0$) since index $1$ (resp., 3) does not appear (resp., appears) in $x$. }\\
$g_{i}$  & Stores the labels that will be substituted, i.e., \combl{ $g_i = 0$ if index $i$ does not appear in $e$, otherwise $g_i = x_{j+d}$ when $e_j = i$}. Therefore, $G = (g_{i})=[0, 0, 2, 0, 5]$ as shown in Fig.~\ref{fig:Sub_N},
\combl{where 
$g_1 = 0$ (resp., $g_3 = 2$) since index $1$ (resp., 3) does not appear in $e$ (resp., appears at $e_2$ and $x_{2+3} = 2$). }\\
${\ell}'_{i}$  &The entries of the resultant label column of the output graph $G'$ by adding $F$ and $G$. In this case, $L' = ({\ell}'_{i})=[3, 5, 2, 2, 5]$.\\
\end{longtable}
\end{example}
\section{GD$_d$-generative ReLU}\label{sec:GD}
For a vertex-labeled graph $G$ with $n$ vertices, a label set $\Sigma = \{1, 2, \ldots, m\}$ with an arbitrary vertex sequence $v_1, v_2, \ldots, v_n$, 
and a non-negative integer $d$, 
we define the {\em GD$_d$-generative ReLU} to be a 
ReLU neural network that generates any graph over $\Sigma$ whose graph edit distance is at most $d$ from $G$ by deleting its edges and/or vertices. 
These deletions are indicated by a random sequence $x = x_1, x_2,  \ldots, x_{2d}$ over $ \{1, \ldots, n\}$, 
where $x_j$ corresponds to the vertex index 
such that the ReLU network deletes
\begin{enumerate}
\item[-] the edge, if it exists, between $v_{x_j }$ and $v_{x_{j+d}}$ when $x_j \neq x_{j+d}$, and
\item[-] the vertex $v_{x_j }$ if $x_j = x_{j+d}$ and  $v_{x_j }$ is an isolated vertex. 
\end{enumerate}
To ensure a fixed number of vertices in the resultant graph, the network pads the label matrix $L$ (resp., adjacency matrix $A$) with $d$ entries (resp., $d$ rows and $d$ columns) of $B$, where $B \gg m$. 
We denote the padded matrices by $U=(u_{i})$ and $V=(v_{ik})$. 
The label matrix $L'=(\ell'_i)$ and adjacency matrix $A'=(a'_{ik})$ of the resultant graph 
$G'$ can be obtained by removing $B$s from the matrices generated by the network.
The computation process of such a network is given in  Fig.~\ref{fig:Del_N} by considering the random sequence 
$5, 3, 3, 5, 2, 3$, and $d = 3$. 
The matrix $T'$ in Fig.~\ref{fig:Del_N} shows that the network deletes the edge incident to the vertices $v_2$  and $v_3$  since  $x_2=3$ and $x_5=2$. The matrices $W$ and $V'$ in Fig.~\ref{fig:Del_N} show that the network deletes an isolated vertex $v_3$ since $x_3=x_6=3$,  and the network ignores $x_1=x_4=5$ because $v_{x_1}=v_5$ is not an isolated vertex.
The resultant graph is obtained by removing $B$s accordingly.
The existence and complexity of such a network is discussed in Theorem~\ref{thm:Gdnn}. 
\begin{theorem}
\label{thm:Gdnn}
For a vertex-labeled graph $G$ with $n$ vertices, label set  
$\Sigma = \{1, 2, \ldots, m\}$, and a non-negative integer $d$, 
there exists a GD$_d$-generative ${\rm ReLU}$ network with size 
$\mathcal{O}(n^2d)$ and constant depth.
\end{theorem}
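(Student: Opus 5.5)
We will follow the pattern of Theorem~\ref{thm:Gsnn}: write a system of equations in which every variable is obtained from earlier ones by sums, $\max(\cdot,0)$, $\delta$, and logical AND. Each of these is simulable by a constant number of ReLU layers by \cite[Propositions~1 and~2]{MT2024}. The input is $x=x_1,\ldots,x_{2d}$. Let $C\gg\max(m,n,B)$, $j\in\{1,\ldots,d\}$ and $i,k\in\{1,\ldots,n\}$.

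\textbf{Step 1: edge deletions.} For each pair $(i,k)$ compute
\[
s_{ik}=\sum_{j=1}^{d}\Big(\delta(x_j,i)\land\delta(x_{j+d},k) + \delta(x_j,k)\land\delta(x_{j+d},i)\Big),
\]
and then $t'_{ik}=\max(a_{ik}-C\cdot s_{ik},0)$ for $i\neq k$. The term $s_{ik}$ is positive exactly when some $j$ asks to delete the edge $v_iv_k$. Repetitions are harmless, since any positive count zeroes the entry, so no deduplication as in Eq.~(\ref{eqe1}) is needed. This gives the intermediate matrix $T'$ and uses $\mathcal{O}(n^2d)$ units.

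\textbf{Step 2: vertex deletions.} A vertex $v_i$ is deleted iff some $j$ has $x_j=x_{j+d}=i$ and $v_i$ is isolated in $T'$. This respects the convention that incident edges are removed first.
\begin{itemize}
\item Compute $r_i=\big[\sum_{j}\delta(x_j,i)\land\delta(x_{j+d},i)\geqslant 1\big]$.
\item Compute $z_i=\delta\big(\sum_k t'_{ik},0\big)$, the isolation test, at cost $\mathcal{O}(n^2)$.
\item Set $w_i=r_i\land z_i$.
\end{itemize}
This gives the vector $W$.

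\textbf{Step 3: fixed-size output.} The output has fixed size $(n+d)$ entries and an $(n+d)\times(n+d)$ matrix, with $B$ in padded positions, matching $U$ and $V$ in Fig.~\ref{fig:Del_N}. The simplest version keeps deleted vertices in place and marks them:
\[
\ell'_i=\max(u_i-Cw_i,0)+B\,w_i,\qquad
v'_{ik}=\max(t'_{ik}-C(w_i+w_k),0)+B\cdot\max(w_i+w_k-1,0)\ (\text{or } [w_i+w_k\geqslant1]).
\]
The entries with index $>n$ are the constant padding $B$.

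\textbf{Main obstacle: compaction.} If the paper's convention needs the surviving vertices first and the $B$s at the end, we must shift entries. Let $p_i=i-\sum_{i'\le i}w_{i'}$ be the new position of a surviving vertex $v_i$. Then
\[
\ell'_q=\sum_i \max\big(u_i-C(1-\delta(p_i,q))-Cw_i,0\big),
\]
plus $B$ if no $i$ maps to $q$. This costs $\mathcal{O}(n^2)$ for labels. For the adjacency matrix, use $v'_{qr}=\sum_{i,k}\big(\delta(p_i,q)\land\delta(p_k,r)\land(1-w_i)\land(1-w_k)\land t'_{ik}\big)$. The straightforward form costs $\mathcal{O}(n^4)$, which breaks the $\mathcal{O}(n^2d)$ bound.

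To stay within budget, we will exploit that at most $d$ vertices are deleted, so the shift $i-p_i$ lies in $\{0,\ldots,d\}$. Each output entry $(q,r)$ then only needs the candidates $i=q+s$, $k=r+s'$ with $s,s'\le d$. Done naively this still gives $\mathcal{O}(n^2d^2)$.

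We therefore perform the row compaction and the column compaction in two separate constant-depth stages, each costing $\mathcal{O}(n^2d)$:
\begin{itemize}
\item First compact rows: $y_{qk}=\sum_{s=0}^{d}\delta(p_{q+s},q)\land(1-w_{q+s})\land t'_{q+s,k}$.
\item Then compact columns analogously.
\end{itemize}
Verifying that this staged shift is correct and stays $\mathcal{O}(n^2d)$ is the step I expect to require the most care. Summing the unit counts over constant depth then yields the claimed size $\mathcal{O}(n^2d)$.
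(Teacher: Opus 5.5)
Your proposal is correct, and its compaction route is essentially the paper's proof: edge deletion by an indicator sum, the isolation test via row sums of $T'$, and then a row stage followed by a column stage. In each stage, every output index picks, among at most $d+1$ shifted candidates, the surviving row (resp.\ column) whose prefix count of survivors equals that index, which is exactly what the paper's $f_{ik}, g^{j}_{ik}, h^{j}_{ik}$ and $q_{ik}, r^{j}_{ik}, s^{j}_{ik}$ implement at cost $\mathcal{O}(n^2d)$ per stage. Two small fixes. First, in the in-place variant use the OR form $[w_i+w_k\geqslant 1]$ rather than $\max(w_i+w_k-1,0)$; with that fix the variant alone already satisfies the paper's convention of reading off $L'$ and $A'$ by removing $B$s. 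Second, in the staged shift restrict to $q+s\leqslant n$ and add the $B$ fill for positions without a preimage separately, since the three-way $\land$ is only valid on binary entries.
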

\begin{proof}
Suppose $G$ and $G'$ are two vertex-labeled graphs with $n$ vertices  such that $G'$ can be obtained from $G$ by deleting the edges and vertices indicated by a random sequence $x_1, x_2, \ldots, x_{2d}$. 
We claim that the process of deleting edges and vertices to obtain $G'$ can be simulated by the ReLU function through the following three systems of equations: the first models edge deletion, the second performs row deletion, and the third deletes columns from the adjacency matrix to execute vertex deletion.
In these systems, $B$ and $C$ are large numbers such that $C \gg B\gg \max(m,n)$, 
 ${j} \in \{1, 2, \ldots, d\}$ and $i, k \in \{1, 2, \ldots, n+d\}$ unless stated otherwise.\bigskip\\
Edge deletion: 
An edge between the vertices with indices $x_j$ and $x_{j+d}$ can be  deleted by using the following two equations. 
\begin{align}
t_{ik}&= \sum_{j=1}^{d} \Big({\rm ReLU} \big( \delta(x_j, i) \land \delta(x_{j+d}, {k})- \delta(x_j, x_{j+d}) \big)+ \nonumber\\
&~~~~~~~  {\rm ReLU} \big(\delta(x_j, {k}) \land \delta(x_{j+d}, i) - \delta(x_j, x_{j+d}) \big)  \Big),   \label{eqb0}\\
t'_{ik}&= {\rm ReLU} (v_{i{k}}- t_{i{k}}).   \label{eqb'0}
\end{align}
$t_{ik}$ identifies the $(i,k)$-th entry of the padded adjacency matrix $V$ of the graph $G$, which is intended to be set to $0$. \bigskip\\
$t'_{ik}$ stores the entries after edge deletion. 

{Deletion of  rows:}
To delete the vertices with indices $x_j$ such that $x_j=x_{j+d}$ for $j \in \{1, 2, \ldots, d\}$, we must check whether the vertex corresponding to $x_j$ is an isolated vertex in $T'=(t'_{ik})$. 
A vertex is isolated if the sum of all entries in its corresponding row/column is $0$. 
To remove an isolated vertex, first delete the rows corresponding to these vertices by using the following system of equations.
\begin{align}
t''_{i} &=\sum_{k=1}^{n}  t'_{i{k}} \text{~for~} i \in \{1, 2, \ldots, n\},   \label{eqt''0}\\
x'_{j} &= \max \Big( x_j - C \big(1-\sum_{i=1}^{n} (  \delta(x_j, i)\land  \delta(t''_i, 0) \big)  , 0\Big),   \label{eqx'0}\\
e_{ik} &=\max  \Big( 1 -  \sum_{j=1}^{d}\big( \delta(x'_j, x_{j+d}) \land \delta(x'_j, i) \big), 0 \Big),   \label{eqe0}\\
e'_i &=\max  \Big( 1 - \sum_{j=1}^{d} \big( \delta(x'_j, x_{j+d}) \land \delta(x'_j, i) \big), 0 \Big),  \label{eqe'0}\\
f_{ik} &= \max \Big( B \sum_{j=1}^{i} e_{ij} - C \cdot \delta(e_{ik}, 0), 0 \Big),   \label{eqf0}\\
f'_i &= \max \Big( B \sum_{j=1}^{i} e'_{i} - C \cdot \delta(e'_i, 0), 0 \Big),   \label{eqf'0}\\
g^{j}_{ik} &=  \Big[ iB \leq f_{(i+j-1)k} \leq iB + 1 \Big] \text{~for~} j \in \{1, 2, \ldots, d+1\}, \nonumber\\
&~~~~~~~ i\in \{1, 2, \ldots, n\},  \label{eqg0}\\
g'^{j}_i &=  \Big[ iB \leq f'_{i+j-1} \leq iB + 1 \Big] \text{~for~} j \in \{1, 2, \ldots, d+1\}, \nonumber\\
&~~~~~~~ i\in \{1, 2, \ldots, n\},   \label{eqg'0}\\
h^{j}_{ik} &= \max \Big( t'_{(i+j-1) k} - C(1 - g^{j}_{ik}), 0 \Big) \text{~for~} j \in \{1, 2, \ldots, d+1\}, \nonumber\\
&~~~~~~~ i\in \{1, 2, \ldots, n\},   \label{eqh0}\\
h'^{j}_{i} &= \max \Big( u_{(i+j-1)} - C(1 - g'^{j}_i), 0 \Big) \text{~for~} j \in \{1, 2, \ldots, d+1\}, \nonumber\\
&~~~~~~~ i\in \{1, 2, \ldots, n\},   \label{eqh'0}
\end{align}
\begin{align}
w_{ik} &= \sum_{j=1}^{d} h^{j}_{ik}  \text{~for~}  i\in \{1, 2, \ldots, n\},   \label{eqw0}\\
u'_{i} &= \sum_{j=1}^{d} h'^{j}_i  \text{~for~} i \in \{1, \ldots, n\}.  \label{eqw'0}
\end{align}
Vertex $v_i$ is isolated if and only if $t''_{i}=0$.
If $x_j = i$ does not correspond to an isolated vertex, it represents an invalid input and is ignored by assigning $x'_j=0$. 
The variables $e_{ik}$ and ${e}_i'$ indicate the rows that need to be preserved in order to obtain the resultant adjacency and label  matrices $V'$ and $U'$, respectively. 
$f_{ik}$ and ${f}_i'$ assign weights to the preserved rows. 
$g^{j}_{ik}$, $h^{j}_{ik}$ and ${g'}^{j}_i$, ${h'}^{j}_i$ are used to determine the positions of rows to be preserved in $V$ and $U$ respectively.
 $w_{ik}$ and $u'_{i}$ give the matrices after deleting the specified rows from $V$ and $U$, respectively.\bigskip\\
{Deleting columns: }
In order to complete the deletion operation of the vertices with index $x_j$ satisfying $x_j=x_{j+d}$, next eliminate the corresponding columns by  applying the following equations.
\begin{align}
p_{ik} &= \max \Big( 1 - \sum_{j=1}^{d}\big( \delta(x'_j, x_{j+d}) \land \delta(x'_j, k) \big) , 0 \Big) \text{~for~}  i \in \{1, \dots, n\}, \label{eqp0} \\
q_{ik} &= \max \Big( B \sum_{j=1}^{k} p_{ij} - C \cdot \delta(p_{ik}, 0), 0\Big)
\text{~for~} i \in \{1, \dots, n\}, \label{eqq0} \\
r^{j}_{ik} &=  \Big[ {k}B \leq q_{i(k +j-1)} \leq {k}B + 1 \Big] 
\text{~for~} j \in \{1, \dots, d+1\},  i, {k} \in \{1, \dots, n\}, \label{eqr0}  \\
s^{j}_{ik} &= \max\Big( w_{i(k +j-1)} - C(1 - r^{j}_{ik}), 0 \Big)
\text{~for~} i, k \in \{1, \dots, n\},  j \in \{1, \dots, d+1\}, \label{eqs0} \\
 v'_{ik} &= \sum_{j=1}^{d'} s^{j}_{ik} \text{~for~}  i, {k}\in \{1, 2, \ldots, n\}. \label{eqy0}
\end{align}
$p_{ik}$ and $q_{ik}$ specify and assign weights to the columns to be preserved in the padded adjacency matrix $V'$ of the resulting graph, respectively.
$r^{j}_{ik}$ and $s^{j}_{ik}$ indicate the positions of columns to be maintained to get $V'$. 
Finally, $V' = (v'_{ik})$ is the matrix obtained by deleting columns from $W = (w_{ik})$. 
By removing $B$ from $U' = (u'_{i})$  and $V' = (v'_{ik})$, we get the label and adjacency matrices $L'$ and $A'$, resp., of the required graph $G'$.

The maximum function, the $\delta$ and the threshold functions used in the preceding equations can be effectively simulated by the ReLU activation function, by using~\cite[Propositions~1 and~2]{MT2024}. 
As a result, a GD$_d$-generative ${\rm ReLU}$ network of size 
$\mathcal{O}(n^2d)$ and constant depth can be constructed.
\end{proof}
The simulation process of GD$_d$-generative ${\rm ReLU}$ network is demonstrated in Example~\ref{exa:Del}. 
\begin{example}
\label{exa:Del}
Consider the graph $G$ given in Fig.~\ref{fig:G_r}, $d=3$ and a random sequence $x=5, 3, 3, 5, 2, 3$
\combl{which indicates the deletion of vertices or edges depending on the condition $x_j = x_{j+d}$; for example, $x_2 = 3 \neq 2 = x_5$ indicates the deletion of the edge between $v_3$ and $v_2$. The edges to be deleted are depicted in red in $G$ and $A$.
$U$ is obtained by padding three $B$s to $L$ to make its size $n+d = 5+3$, so that the size of the network output remains the same.
$T$ is obtained by setting the encircled entries of $A$ to zero (i.e., performing edge deletion) and padding three rows and three columns with entries $B$ to make its dimension $8 \times 8$. In other words, $T'$ is obtained by performing all edge deletion operations and the padding operation.
Vertex deletion is then performed in three steps:
(i)~check the condition $x_j = x_{j+d}$. In this case, $x_3 = x_6 = 3$, which means that vertex 3 is a candidate for deletion;
(ii)~check in $T'$ whether the row (3rd) corresponding to the vertex 3 has all entries either zero or $B$, which is true in this case. This confirms that vertex 3 is an isolated vertex and should be deleted; hence, its row is marked in red in $T'$ with the corresponding entry in $U$. Note that the same condition does not hold for vertex 5;
(iii)~delete the rows from $T'$ and the labels from $U$ corresponding to the vertices to be deleted to obtain $U'$ and $W$, and finally obtain $V'$ by deleting the columns corresponding to those vertices from $W$. In this case, vertex 3 has been deleted.
To keep the size of the network output the same, additionally delete padded rows and columns from $T'$ and $W$ so that the total deletion count equals $d$. In this case, the number of vertex deletions is 1; therefore, two padded rows and columns marked in red in $T'$ and $W$ are removed to obtain $U'$ and $V'$.
The matrices $L'$ and $A'$ of the resultant graph $G'$ are obtained by performing the unpadding operation.} 
The resultant graph $G'$ obtained by applying the deletion operations on $G$ due to $x$ is shown in Fig.~\ref{fig:Del_N}.
We demonstrate the process of obtaining $L'$ and $A'$ of $G'$ by using Eqs.~(\ref{eqb0})-~(\ref{eqy0}) as follows. 
\begin{figure}[H]
	\centering
	\includegraphics[scale = 0.55]{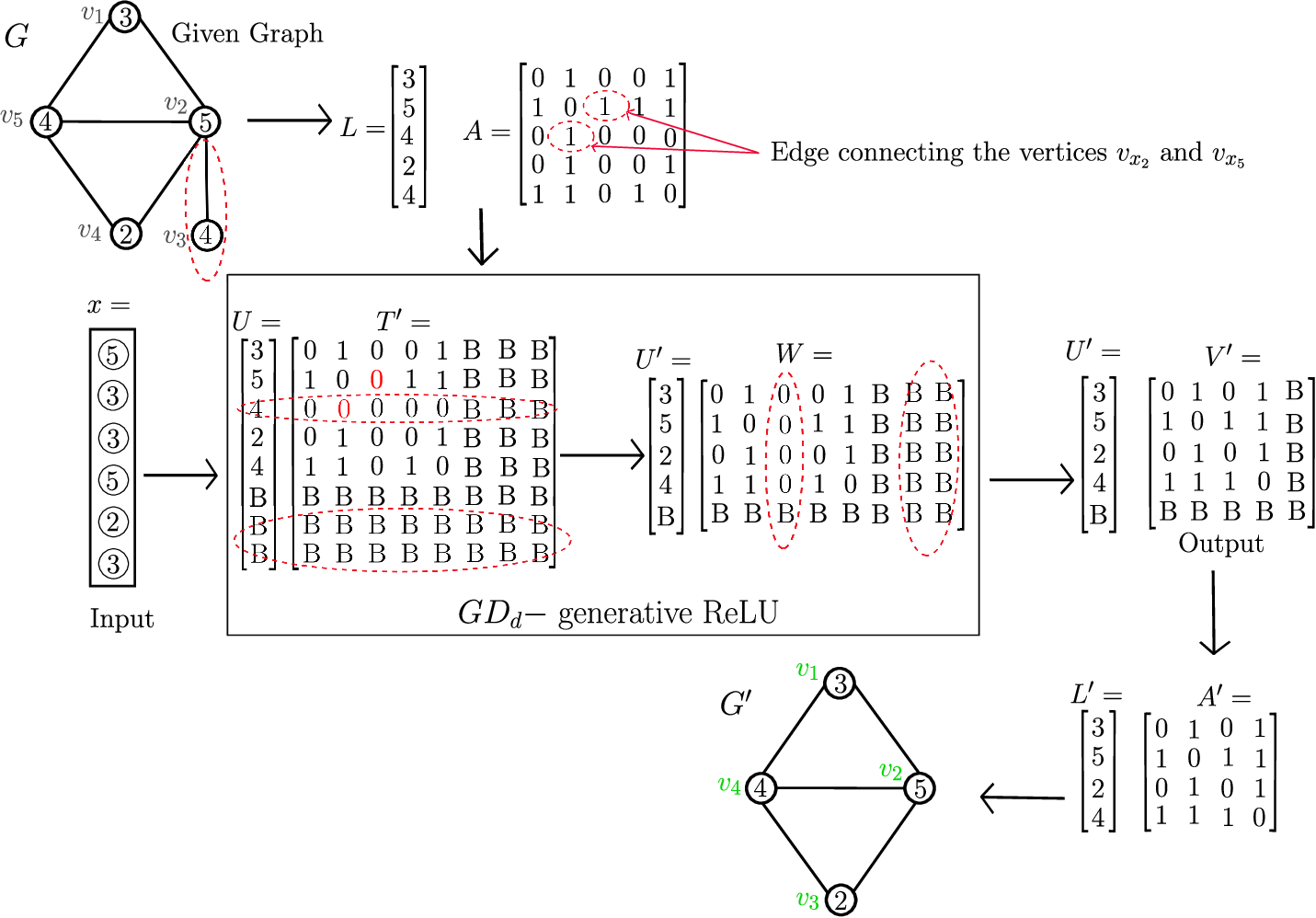}
	\caption{An illustration of the simulation of  GD$_d$-generative ReLU for a given graph $G$ with an arbitrary vertex sequence 
	$v_1, \ldots, v_5$, $d=3$ and the input layer 
$x=5, 3, 3, 5, 2, 3$.  
$U'$ and $W$ are obtained from $U$ and $T'$, resp., by row deletion.
Graph $G$ is represented by the label matrix $L$ and the adjacency matrix $A$, whereas the output graph $G'$ is represented by the label matrix $L'$ and the adjacency matrix $A'$. The entries in red in $T'$ indicate the deleted edges, whereas 
the red ellipses indicate the rows and columns to be deleted to obtain the desired graph $G'$.
}\label{fig:Del_N}
\end{figure}
\begin{longtable}{c p{14cm}}
$x_j$ & Identifies the vertices and edges to be deleted. In this case $x=5, 3, 3, 5, 2, 3$. The values $x_2 = 3$ and $x_5 = 2$ refer to the deletion of the edge
between $v_3$ and $v_2$, while $x_1 = x_4=5$ and $x_3 = x_6=3$ refer to the deletion of vertices $v_5$ and $v_3$. The row and column corresponding to $x_3 = x_6=3$ are deleted because $v_3$ becomes an isolated vertex after edge deletion as shown in the matrix $T'$  in Fig.~\ref{fig:Del_N}. However, the case $x_1 = x_4=5$ is ignored because $v_5$ is not an isolated vertex. \\
$t_{i{k}}$  & A binary variable that equals $1$ when the corresponding entry $v_{ik}$ in the padded adjacency matrix $V$ is set to $0$, indicating that the edge between vertices $v_i$ and $v_k$ should be removed. 
In this case $t_{3,2}=t_{2,3}=1$. All other values are $0$. \\
$t'_{i{k}}$  & The entries of the modified $V$ after setting indicated entries to zero. 
In this example, $t'_{3,2}=t'_{2,3}=0$. All other values of $t'_{i{k}}$ are same as $v_{i{k}}$. The matrix $T'=(t'_{i{k}})$ is shown in Fig.~\ref{fig:Del_N}, where $t'_{3,2}=t'_{2,3}=0$ are in red color.\\
$t''_{i}$  & After edge deletion, this variable checks if $v_i$ is an isolated vertex using $T'=(t'_{i{k}})$. 
More precisely, the $i$-th vertex is isolated if and only if $t''_{i}=0$. 
In this example, $t''_{1}=t''_{4}=2$, $t''_{2}=t''_{5}=3$, $t''_{3}=0$, which shows that only the third vertex is isolated in the matrix $T'$.\\
$x'_{j}$  & Sets the input $x_j=0$ if it corresponds to a non-isolated vertex. Specifically, when $t''_{x_j} > 0$, the vertex cannot be deleted, so $x'_{j}=0$. Hence, for this case $x'_{1}=0$.\\
$e_{ik}$  & A binary variable that takes the value $e_{ik}=1$ when the $i$-th row is retained to keep the vertex $v_i$, otherwise $e_{ik} = 0$ for all $k \in \{1, \dots, n+2\}$. 
In this example $e_{3k}=0$ for $k \in \{1, \dots, n+2\}$. All other entries $e_{ik}=1$ for $i\neq3$.
\\
$e'_i$ & A binary variable that takes the value $e'_i=1$ when the $i$-th label is retained to keep the vertex $v_i$, otherwise $e'_{i} = 0$. For instance, $e'_3=0$ because only the label corresponding to the vertex $v_3$ in matrix $U$ needs to be deleted. All other values are $1$.\\
$f_{ik}$ & Assigns weights to the preserved rows, i.e,. the rows for which $e_{ik}=1$ for $k \in \{1, \dots, n+2\}$, in the ascending order.  
For the third row in this example $f_{3k}=0$ because $e_{3k}=0$ whereas $f_{1k}=B$, $f_{2k}=2B$, $f_{4k}=3B$, $f_{5k}=4B$, $f_{6k}=5B$, $f_{7k}=6B$ for $k \in \{1, \dots, n+2\}$.\\
$f'_i $ & Allocates weights to the retained values of $e'_i$ in ascending order. Here, $f'=[B, 2B, 0, 3B, 4B, 5B, 6B]$. 
\\
$g^{j}_{ik}$  & This variable tracks how far each non-deleted row is shifted forward in the resulting matrix. Since at most $d$ non-padded rows can be deleted from the matrix $V=(v_{ik})$, the maximum shift for any row is $d$.
In this case $g^{1}_{1,k}=g^{1}_{2,k}=1$ because there is no row deleted before rows $i=1, 2$, resulting in a shift of $j-1=0$.   
Similarly, $g^{2}_{i,k}=1$ for $i\neq1$, as these rows experience a forward shift of $j-1=1$ due to the deletion of one row before the rows  $i=3, 4, \dots, n+d$. All other values are set to $0$.
 \\
${g'^j}_{i}$  & Similar to ${g^j}_{ik}$, this variable represents the shift in each non-deleted entry of the padded column matrix $U=(u_{i})$. Thus, $g'^{1}_{1}=g'^{1}_{2}=1$ because no entry is deleted before $i=1, 2$, resulting in a shift of $j-1=0$. Likewise, $g'^{2}_{3}=g'^{2}_{4}=g'^{2}_{5}=g'^{2}_{6}=g'^{2}_{7}=1$, since one entry is deleted before $i=3, 4, \dots, n+d$, giving a forward shift of $j-1=1$.  All other values are $0$.\\
$h^{j}_{ik}$ & This variable shows that a given position $(i,k)$ in the  matrix can take the value $t'_{(i+j-1)k}$, for some $j$, depending on the number of zero rows preceding $f_{ik}$ or on the shift on each row given by $g^{j}_{ik}$. In this example $h^{1}_{1k}=t'_{(1+1-1)k}=[0, 1, 0, 0, 1, B, B]$, $h^{1}_{2k}=t'_{(2+1-1)k}=[1, 0, 0, 1, 1, B, B]$, $h^{2}_{3k}=t'_{(3+2-1)k}=[0, 1, 0, 0, 1, B, B]$, $h^{2}_{4k}=t'_{(4+2-1)k}=[1, 1, 0, 1, 0, B, B]$. Whereas,  $h^{2}_{1k}=h^{3}_{1k}=h^{4}_{1k}=h^{2}_{2k}=h^{3}_{2k}=h^{4}_{2k}=h^{1}_{3k}=h^{3}_{3k}=h^{4}_{3k}=h^{1}_{4k}=h^{3}_{4k}=h^{4}_{4k}=h^{1}_{5k}=h^{3}_{5k}=h^{4}_{5k}=[0, 0, 0, 0, 0, 0, 0]$. \\
$h'^{j}_{i}$ & This variable indicates that the value at position $i$ in the resulting vector of labels may be assigned from $u_{(i+j-1)}$, for some $j$, based on how many zero entries precede $f'_{i}$ or on the shift on each entry given by $g'^{j}_{i}$. Here,  $h'^{1}_{1}=3$, $h'^{1}_{2}=5$, $h'^{2}_{3}=2$, $h'^{2}_{4}=4$, and $h'^{2}_{5}=B$. All other values are $0$.\\
$w_{ik}$ & For a fixed $i$ and $k$, by using the non-zero entries of $h^{j}_{ik}$, this variable stores the entries of the resultant matrix $W$ of order $n \times n+d$ obtained by the deletion of appropriate rows.  In this case, $w_{1k}=[0, 1, 0, 0, 1, B, B]$, $w_{2k}=[1, 0, 0, 1, 1, B, B]$, $w_{3k}=[0, 1, 0, 0, 1, B, B]$, $w_{4k}=[1, 1, 0, 1, 0, B, B]$. The matrix $W$ is shown in Fig.~\ref{fig:Del_N}.\\
$u'_{i}$  & For a fixed value of $i$, the non-zero entries of $h'^{j}_{i}$, form $u'_{i}$. This variable constructs a label matrix $U'$ by deleting the required rows as depicted in Fig.~\ref{fig:Del_N}.\\
$p_{ik}$  & For a fixed $k$, it is a binary variable that equals $1$ when the $k$-th column is kept to preserve the vertex $v_i$, otherwise, $p_{ik}=0$. 
In this example $p_{i3}=0$ for $i \in \{1, \dots, n\}$. All other entries $p_{ik}=1$ for $k\neq3$.\\
$q_{ik}$  & A variable that assigns weights to the preserved columns in  increasing order.  For instance, $q_{i3}=0$ because $p_{i3}=0$, indicating that no weight is assigned to the third column as its corresponding vertex $v_3$ needs to be deleted. 
Whereas, $q_{i4}=3B$ signifies that there are three non-zero (preserved) columns until $k=4$ in the matrix $Q=(q_{ik})$. Similarly, $q_{i1}=B$, $q_{i2}=2B$, $q_{i5}=4B$, $q_{i6}=5B$, $q_{i7}=6B$ for $i \in \{1, \dots, n\}$. \\
$r^{j}_{ik}$  & This variable records the forward shift on each non-deleted column in the resulting adjacency matrix. Since at most $d$ original (non-padded) columns can be removed from the matrix $W=(w_{ik})$, the maximum shift experienced by any column is $d$. 
In this case $r^{1}_{i1}=r^{1}_{i2}=1$ because there is $j-1=0$ shift in first and second columns, as no column needs to be removed before first and second columns. Whereas, $r^{2}_{i3}=r^{2}_{i4}=r^{2}_{i5}=1$ because the remaining columns have a shift of $j-1=1$. All other values are $0$.
 \\
$s^{j}_{ik}$  & This variable shows that a given position $(i,k)$ in the resulting adjacency matrix can take the value $w_{i(k+j-1)}$, for some $j$, depending on the number of zero columns preceding the $q_{ik}$. For  example $s^{1}_{i1}=w_{i(1+1-1)} =[0, 1, 0, 1, B]$, $s^{1}_{i2}=w_{i(2+1-1)}=[1, 0, 1, 1, B]$, $s^{2}_{i3}=w_{i(3+2-1)}=[0, 1, 0, 1, B]$,  $s^{2}_{i4}=w_{i(4+2-1)}=[1, 1, 1, 0, B]$, and $s^{2}_{i5}=w_{i(5+2-1)}=[B, B, B, B, B]$. All other values of $s^{j}_{ik}$ are $0$.\\
$v'_{ik}$  & This variable specifies the positions where  $s^{j}_{ik}$ takes non-zero values for fixed $i$ and $k$ and gives the required output matrix $V'$ as shown in Fig.~\ref{fig:Del_N}. For example $v'_{i1}=s^{1}_{i1}=[0, 1, 0, 1, B]$, $v'_{i2}=s^{1}_{i2}=[1, 0, 1, 1, B]$, $v'_{i3}=s^{2}_{i3}=[0, 1, 0, 1, B]$, $v'_{i4}=s^{2}_{i4}=[1, 1, 1, 0, B]$, and $v'_{i5}=s^{2}_{i5}=[B, B, B, B, B]$ . \\
\end{longtable}
\end{example}
\section{GI$_d$-generative ReLU}\label{sec:GI}
For a vertex-labeled graph $G$ with $n$ vertices, a label set $\Sigma = \{1, 2, \ldots, m\}$ with an arbitrary vertex sequence $v_1, v_2, \ldots, v_n$, 
and a non-negative integer $d$, 
we define a {\em GI$_d$-generative ReLU} to be a 
ReLU neural network that generates any graph over $\Sigma$ whose graph edit distance is at most $d$ from $G$ due to the insertion of edges and/or vertices indicated by a random sequence $x=x_{1}, x_{2}, \ldots, x_{3d}$ such that $x_{j} \in \{1,2,  \ldots, n+d-1\}$, $1 \leq j \leq 2d$ and $x_{2d+j}\in \Sigma$, $1 \leq j \leq d$. 
The network inserts 
 \begin{enumerate}
\item [-]a vertex with label $x_{2d+j}$ corresponding to $x_j$ whenever $x_j = x_{j+d}$, and 
\item[-] an edge between $x_j$ and $x_{j+d}$ when $x_j \neq x_{j+d}$, \comb{excluding the invalid inputs given in Table~\ref{tab:reset}.}
 \end{enumerate}
To ensure a fixed number of nodes in the output, we pad the  label column $L$ with $d$ entries of $B$s, and extend the adjacency matrix $A$ by $d$ rows and $d$ columns with all $B$ entries, where $B \gg m$. 
The resulting padded matrices are denoted by $U=(u_{i})$ and $V=(v_{ik})$.
Observe that the total number $d'$ of vertex insertions is equal to the number of $j$s such that $x_j = x_{j+d}$. 
Therefore the output label column $U'=(u'_i)$ has $d-d'$ $B$s and the adjacency matrix $V'=(v'_{ik})$ has $d-d'$ rows and  $d-d'$ columns with all $B$s. 
Finally, by removing all $B$s, we can get the label column $L'=(\ell'_i)$ and the adjacency matrix $A'=(a'_{ik})$ of the resultant graph $G'$ as shown in Fig.~\ref{fig:Del_N}. 
Due to the random nature of $x$, some inputs may turn out to be invalid, and therefore require refinements to perform insertion operations. 
Such invalid inputs and their refinements are listed in Table~\ref{tab:reset}. 
\begin{table}[H]
\setlength{\tabcolsep}{0pt}
\centering
\begin{tabular}{|c|c|c|}\hline
Sr. no. & Invalid Inputs & Refinements \\\hline
(i)& $x_j \neq  x_{j+d}, x_j= x_k, x_{j+d}=x_{k+d}$, $k< j$ &  $x_j:=0$  \\
(ii)& $x_j \neq  x_{j+d}, x_j>n+d' $ &  $x_j:=0$ \\
(iii)& $x_j \neq  x_{j+d}, x_{j+d}>n+d' $ &  $x_{j+d}:=0$ \\
\hline
\end{tabular}
\caption{ Invalid inputs and their refinements.}
\label{tab:reset}
\end{table}
\comblu{To ignore the labels $x_{j+2d}$ corresponding to the valid edge insertions, set $x_{j+2d}:=B$  whenever $x_j \neq  x_{j+d}$ holds.  }
We discuss the existence of such networks in Theorem~\ref{thm:Ginn}. 
\begin{theorem}
\label{thm:Ginn}
For a vertex-labeled graph $G$ with $n$ vertices from 
$\Sigma = \{1, 2, \ldots, m\}$, and a non-negative integer $d$, 
there exists a GI$_d$-generative ${\rm ReLU}$ network with size $\mathcal{O}(n^2d)$ and constant depth.
\end{theorem}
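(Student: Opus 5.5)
We prove Theorem~\ref{thm:Ginn} in the same way as Theorems~\ref{thm:Gsnn} and~\ref{thm:Gdnn}. We write down a constant number of layers of equations whose only nonlinearities are $\max$, $\delta$, threshold and $\land$. Each of these is ReLU-simulable by~\cite[Propositions~1 and~2]{MT2024}. We then count the neurons. Take $C \gg B \gg \max(m,n)$, $j\in\{1,\ldots,d\}$ and $i,k\in\{1,\ldots,n+d\}$. The padded matrices $U$ and $V$ are fixed by $G$, so they enter only as constants or biases.

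\emph{Step 1: the number of vertex insertions and the refinements of Table~\ref{tab:reset}.} Set $\alpha_j=\delta(x_j,x_{j+d})$, which is $1$ exactly when $x_j$ asks for a vertex insertion. Let $d'=\sum_{j}\alpha_j$. Refinement (i) discards a repeated edge request:
\[
y_j=\max\Bigl(x_j - C\bigl(\alpha_j+\textstyle\sum_{k<j}\delta(x_j,x_k)\land\delta(x_{j+d},x_{k+d})\bigr),0\Bigr).
\]
This uses $\mathcal{O}(d^2)$ units. Refinements (ii) and (iii) compare $x_j$ and $x_{j+d}$ with $n+d'$. We use the threshold unit $\beta_j=[\,x_j\ge n+d'+1\,]$. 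This is a linear threshold in the inputs, because $d'$ is a linear combination of first-layer outputs. We then zero out $y_j$ by subtracting $C\beta_j$ inside a $\max$, and build $z_j$ from $x_{j+d}$ in the same way. An edge request is valid iff $\gamma_j=\delta(y_j,0)\land\delta(z_j,0)$ is zero; equivalently we test $y_j,z_j\ge 1$ with $[y_j\ge1]\land[z_j\ge1]$. Following the remark after Table~\ref{tab:reset}, we set the label to $\lambda_j=\max(x_{j+2d}-C(1-\alpha_j),0)+B(1-\alpha_j)$.

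\emph{Step 2: vertex insertion into the padded matrices.} New vertices are appended at the end. The $r$-th inserted vertex occupies position $n+r$, where $r=\sum_{k\le j}\alpha_k$ is a linear prefix sum. Hence
\[
u'_i = u_i\,[\,i\le n\,] + \sum_{j}\max\bigl(\lambda_j - C(1-\alpha_j\land\delta(n+\textstyle\sum_{k\le j}\alpha_k,\,i)),0\bigr) + B\bigl[\,i>n+d'\,\bigr].
\]
The last term is again a threshold on a linear form. This costs $\mathcal{O}(nd+d^2)$ units.

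\emph{Step 3: the adjacency matrix.} Entry $v'_{ik}$ equals $a_{ik}$ for $i,k\le n$. For $i,k\le n+d'$ with $\max(i,k)>n$ it equals $0$, a new isolated vertex. Otherwise it equals $B$. After these base values are set, we add the edge indicators
\[
\tau_{ik}=\sum_j\bigl(\delta(y_j,i)\land\delta(z_j,k)+\delta(y_j,k)\land\delta(z_j,i)\bigr),
\]
and output $v'_{ik}=\min(\text{base}_{ik}+\tau_{ik},\,\text{base}_{ik}\text{ or }1)$. To make this expressible, we write $v'_{ik}=\text{base}_{ik}+\max(\tau_{ik}-\text{base}_{ik}\cdot[\ldots],0)$, clipped to $1$ via $\min(\cdot,1)=1-\max(1-\cdot,0)$. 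Clipping handles an already existing edge, and also the case where $(x_j,x_{j+d})$ and $(x_{j+d},x_j)$ both appear. The base matrix needs thresholds of the form $[\,i\le n+d'\,]$, which are $\mathcal{O}(n+d)$ units shared across entries. The indicators $\tau_{ik}$ need $\mathcal{O}((n+d)^2 d)$ units. Since $d$ can be assumed at most of order $n^2$, or simply using $d\le n$ in the intended regime, the total is $\mathcal{O}(n^2d)$. All the layers above have constant depth, which gives the theorem.

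\emph{Main obstacle.} The delicate point is that validity in (ii) and (iii) depends on $d'$, which is itself computed by the network. An edge may legitimately touch a vertex inserted by a \emph{later} index $j$; this is allowed because the insertion order can be rearranged with all vertex insertions first. So I must check that comparing against the total $d'$, rather than a prefix count, yields an edit path of cost at most $d$. I also need to argue that the clipping of $\tau$ never creates a spurious edge in a $B$-padded slot. I would handle the latter by multiplying $\tau_{ik}$ by $[\,i\le n+d'\,]\land[\,k\le n+d'\,]$ before adding it.
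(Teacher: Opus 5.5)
Your proposal is correct and follows the paper's proof essentially step for step. You refine the requests as in Table~\ref{tab:reset} using $d'=\sum_j\delta(x_j,x_{j+d})$, write the inserted labels into slots $n+1,\ldots,n+d'$ of the padded label column, zero the matching $B$-block of the padded adjacency matrix, and then add symmetric edge indicators, all built from $\max$, $\delta$, threshold and $\land$ units.

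The differences are local and in your favour:
\begin{itemize}
\item You place the $r$-th inserted label via the prefix count $\sum_{k\le j}\alpha_k$, instead of the paper's rank-based sort (Eqs.~(\ref{eqg'2})--(\ref{eqx32})).
\item Your clipping to $1$ avoids the entry $2$ that the paper's Eq.~(\ref{eqv'2}) outputs when both $(a,b)$ and $(b,a)$ are requested. Refinement (i) only catches repeats with the same orientation, so the paper misses this case.
\item The ordering concern you flag is immediate. Perform all $d'$ vertex insertions first. The surviving edge requests number at most $d-d'$ and touch only indices $\le n+d'$.
\end{itemize}

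Two small slips need fixing:
\begin{itemize}
\item An edge request is valid iff $\delta(y_j,0)\lor\delta(z_j,0)=0$, not the $\land$ version you wrote. This is harmless, since $\tau_{ik}$ already vanishes when an endpoint is $0$.
\item The bound $(n+d)^2d=\mathcal{O}(n^2d)$ requires $d=\mathcal{O}(n)$; it fails for $d$ of order $n^2$. The paper makes the same assumption silently when it lets $i,k$ range over $\{1,\ldots,n+d\}$.
\end{itemize}
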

\begin{proof}
Consider two vertex-labeled graphs $G$ and $G'$ of order $n$ over $\Sigma$, such that the graph $G'$ can be obtained from $G$ by using the insertion of vertices and/or edges indicated by a sequence $x_1, x_2, \ldots, x_{3d}$. 
We claim that the construction of $G'$ can be simulated by the following system of equations, where $j\in \{1, 2, \ldots, d\}$, $i,k \in \{1,2, \ldots, n+d\}$ unless stated otherwise. The constants $B, C$ are chosen to be large, with $C \gg B\gg \max(m,n)$.\bigskip\\
{Refinements of the input $x$:} 
The refinement (i) is carried out by  Eqs.~(\ref{eqe2}), (\ref{eqe'2}); refinement (ii) is performed by Eqs.~(\ref{eqf3}), (\ref{eqx13}); refinement (iii) is performed by Eqs.~(\ref{eqf'3}), (\ref{eqx23}); and refinement (iv) is performed by  Eq.~(\ref{eqg3}). 
Eqs.~(\ref{eqg'2}) and (\ref{eqx32}) are applied to arrange $x_{j+2d}$ in ascending order, ensuring that $B$ appears as the last entry of the output label column to ignore them easily. 
\begin{align}
e_{jk} &= \max \Big( \big(1-\delta(x_j, x_{j+d}) \big) \land  \delta(x_j, x_{k})\land \delta(x_{j+d}, x_{k+d}) , 0 \Big), \label{eqe2}\\
e'_j &= \max \Big( x_j - C \cdot \sum_{k=1}^{j-1} e_{jk} , 0 \Big), \label{eqe'2}\\
f_j &= \max  \Big( \big(1-\delta(e'_j, x_{j+d}) \big) \land H(e'_j-n-d'-1) , 0 \Big), \label{eqf3}\\
x^1_j &= \max \Big( e'_j - C f_j , 0  \Big), \label{eqx13}\\
f'_j  &= \max \Big( \big( 1-\delta(x_j, x_{j+d}) \big) \land H(x_{j+d}-n-d'-1) , 0 \Big), \label{eqf'3}\\
x^2_j &= \max \Big( x_{j+d} - C f'_j , 0 \Big), \label{eqx23}\\
g_j &= \max \Big( x_{j+2d} - C(1-\delta(x_j, x_{j+d})), 0 \Big) 
+ \max \Big( B - C \, \delta(x_j, x_{j+d}), 0 \Big), \label{eqg3}\\
g'_{j} &= \sum_{k=1}^{d} H(g_j - g_{k}) - \sum_{k=j}^{d} \delta(g_j, g_k),  \label{eqg'2}\\
x^3_j &= \sum_{i=1}^{d} \max \Big( g_i - C(1-\delta(j, g'_i+1)), 0 \Big).   \label{eqx32}
\end{align}
\bigskip\\
{Vertex insertion: }
To insert $d'$ 
isolated vertices corresponding to $x_j$ such that $x_j=x_{j+d}$, replace the entries in  $U$ with $B$ by the labels $x^3_{j}$ by using the Eqs.~(\ref{eqh2}), (\ref{equ'2}) and set all $B$ entries of the $d'$ number of rows and columns in the padded adjacency matrix $V$ into $0$ by using Eqs.~(\ref{eqp2})-(\ref{eqr2}).
\begin{align}
h_i &= \sum_{j=1}^{d} 
   \max \Big( x^3_j - C \big(1 - \delta(i, n+j)\big), 0 \Big), \label{eqh2}\\
u'_i &= \max \Big( \ell_i - C\big(1 - H(n-i)\big), 0 \Big) 
     + \max \Big( h_i - C\big(1 - H(i-n-1)\big), 0 \Big), \label{equ'2}\\
p_{ik} &= \max \Big( H(n+d' - i) \land H(i-n-1) \land H(n+d' - k), 0 \Big), \label{eqp2}\\
p'_{ik} &= \max \Big( B - C (1- p_{ik}), 0 \Big), \label{eqp'2}\\
q_{ik} &= \max \Big( H(n+d' - k) \land H(k-n-1) \land H(n-i), 0 \Big), \label{eqq2}\\
q'_{ik} &= \max \Big( B - C (1- q_{ik}), 0 \Big), \label{eqq'2}\\
r_{ik} &= v_{ik} - \big( p'_{ik} + q'_{ik} \big). \label{eqr2}
\end{align}
\bigskip\\
{Edge insertion: }
In order to insert the edges corresponding to $x^1_j=x^2_{j}$ for $j \in \{1, 2, \ldots, d\}$, by replacing $0$ with $1$ in $(x^1_j, x^2_{j})$-th and $(x^2_j, x^1_{j})$-th entries of the matrix $R = (r_{ik})$ as follows. 
\begin{align}
s^{j}_{ik} &= \max \Big( (1-\delta(x^1_j, x^2_{j})) 
                        \land  \delta(x^1_j, i)
                        \land \delta(x^2_j, k) ,0 \Big)+ \nonumber\\
            &\quad   \max \Big( (1-\delta(x^1_j, x^2_{j})) 
                        \land \delta(x^1_j, k) 
                        \land \delta(x^2_j, i) ,0  \Big),
\label{eqs2}\\
s'_{ik} &=  \sum_{j=1}^{d} s^{j}_{ik}, \label{eqs'2} \\
v'_{ik} &= s'_{ik} +\max \Big( r_{ik} - C  s'_{ik}, 0 \Big). \label{eqv'2}
\end{align}
For a fixed $j$, $S^j = (s^{j}_{ik})$ is the matrix with entries $(x^1_j, x^2_{j})$-th and $(x^2_j, x^1_{j})$-th equal to $1$ when 
``$x^1_j=i$ and $x^2_j=k$'' or ``$x^1_j=k$ and $x^2_j=i$'', and 
$0$ elsewhere. 
$S' = (s'_{ik})$ is a binary matrix that keeps the sum of entries of $s^{j}_{ik}$ for fixed $i$ and $k$. 
The final matrix $V= (v'_{ik})$ is then obtained by applying the vertex and edge insertions specified by the input.

To obtain the required graph $G'$, we remove $B$ from $U' = (u'_{i})$ and eliminate from $V' = (v'_{ik})$ all rows and columns of $B$s. This yields the label and adjacency matrices $L'$ and $A'$ of $G'$, respectively. Furthermore, the maximum function, $\delta$ and threshold functions appearing in the preceding equations can be realized using the ReLU activation function, as demonstrated in~\cite[Propositions~1 and~2]{MT2024}. 
As a result, a ${\rm GI}_d$-generative ${\rm ReLU}$ network of size 
$\mathcal{O}(n^2d)$ and constant depth can be constructed.
\end{proof}
\begin{example}
\label{exa:ins}
Consider the graph $G$ given in Fig.~\ref{fig:G_r}, $d=3$ and 
$x=4, 3, 7, 6, 3, 2, 1, 5, 2$, 
\combl{where the first six ($2d$) entries indicate the indices of vertices for edge or vertex insertion, and the last three ($d$) entries are the labels of newly inserted vertices, as depicted in red in Fig.~\ref{fig:Ins_N}.
Edge (resp., vertex) insertion is performed when $x_j \neq x_{j+d}$ (resp., $x_j = x_{j+d}$).
If an edge insertion is detected, the corresponding label entry $x_{j+2d}$ is ignored by setting it to $B$.
Finally, the updated labels $x_{1+2d}, \ldots, x_{d+2d}$ are arranged in ascending order.
For example, in this case, $x_1 = 4 \neq 6 = x_4$ (resp., $x_2 = 3 = x_5$) implies an edge insertion (resp., vertex insertion).
Therefore, $x_7$ is set to $B$.
The indices greater than $n = 5$ are set to zero; for example, $x_3 = 7$ is set to zero, and hence $x_{3+d}$ and $x_{3+2d}$ are ignored.
The updated $x$ is shown in Fig.~\ref{fig:Ins_N} as $x^1, x^2, x^3$, where the indices set to zero and the labels after rearrangement are depicted in red.
Vertex insertion is performed as follows.
The matrix $L$ is updated by inserting the label 5 of a new vertex, as depicted in red in $U'$, and by padding two $B$s to make the size $n+d = 5 + 3$, thereby ensuring a fixed-size output from the network.
Similarly, $A$ is updated to $R$ by adding a row and a column of zeros corresponding to the new vertex, as depicted in red in $R$.
The index of the new vertex is 6, and it is an isolated vertex at this stage; therefore, the corresponding entries in $R$ are 0.
Additionally, two rows and two columns, each filled with $B$s, are padded to maintain a fixed-size output.
Edge insertion is performed between vertices 4 and 6, since $x_1 = 4 \neq 6 = x_4$, by changing the corresponding entry from 0 (as marked in red in $R$) to 1 in $S'$, also depicted in red.
The entries of the padded rows and columns are set to zero.
Finally, the resultant matrix $V'$, incorporating both vertex and edge insertions, is obtained by adding $R$ and $S'$. } 
The resultant graph $G'$ is obtained by applying the insertion operations on $G$ due to the given $x$ is shown in Fig.~\ref{fig:Ins_N}.
We demonstrate the process of obtaining $L'$ and $A'$ of $G'$ by using Eqs.~(\ref{eqb0})-(\ref{eqy0}) as follows. 
\begin{figure}[H]
	\centering
	\includegraphics[scale = 0.52]{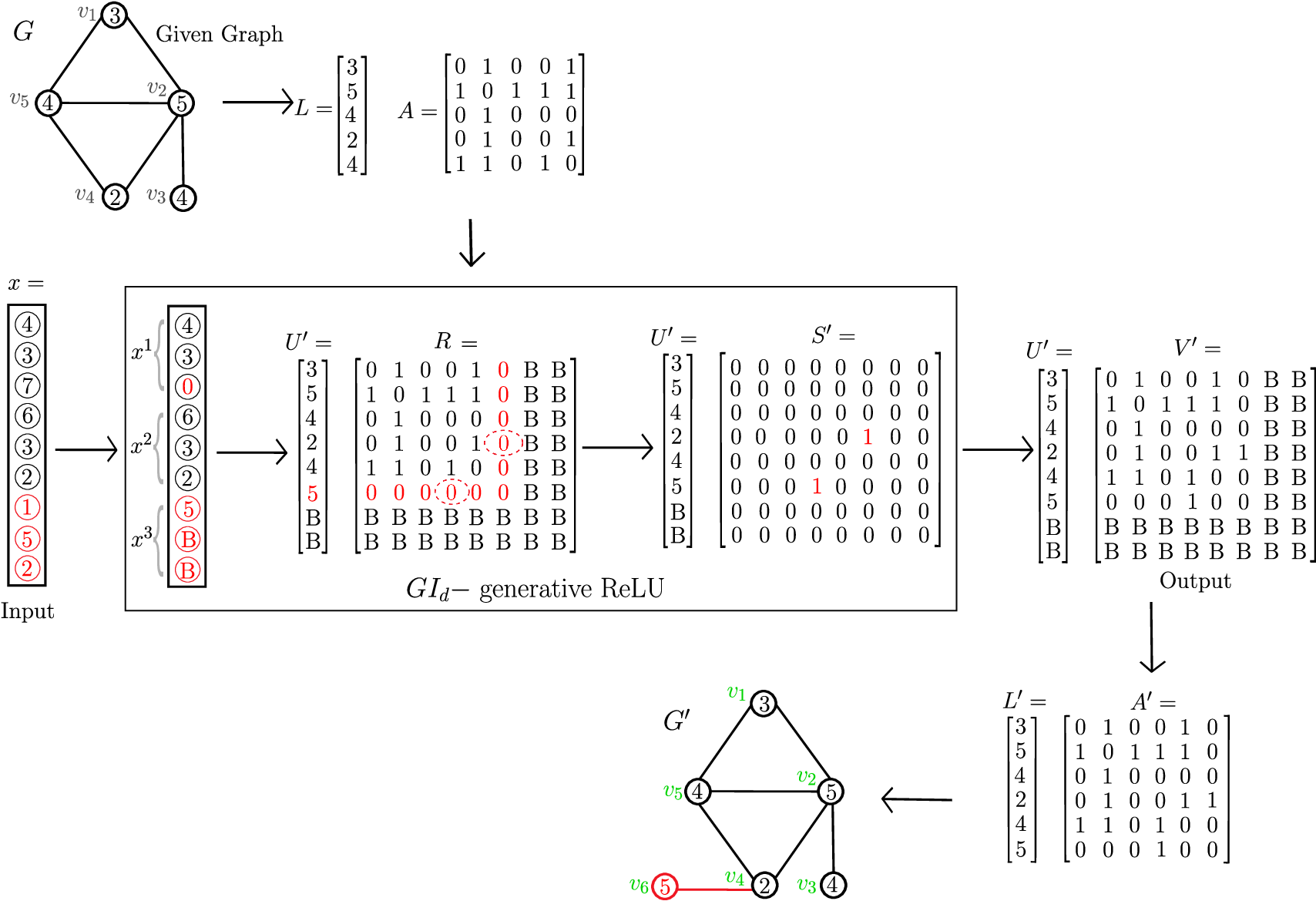}
	\caption{A demonstration of obtaining $G'$ from $G$ using a GI$_d$-generative ReLU, where $d=3$ and the input layer $x=4, 3, 7, 6, 3, 2, 5, 1, 2$.
}\label{fig:Ins_N}
\end{figure}
\begin{longtable}{c p{14cm}}
$x_j$ & Indicates vertex indices (resp.,  label of new vertices) when $1 \leq j \leq 2d$ (resp., $2d+1 \leq j \leq 3d$).\\
$e_{jk}$ & An indicator that takes value $1$ whenever there exists an index $k< j$ with $x_j= x_k$ and $x_{j+d}=x_{k+d}$, provided $x_j \neq  x_{j+d}$ for $j, k \in \{1, \dots , d\}$ to avoid invalid inputs. In this example, $e_{jk}=0$ for every $j,k$.\\
$e'_{j}$  &  Nullifies the repeated input $x_j$ if for any $k< j$, $e_{jk}=1$. In this case $e'_{j}=x_{j}$ for $j \in \{1, \dots , d\}$. \\
$f_{j}$  & A binary variable to identify if $e'_j$ is greater than $n+d'$. In this example, $f_{3}=1$ whereas $f_{1}=f_{2}=0$.\\
$x^1_{j}$  & Sets $e'_{j}$ to $0$ if it is greater than $n+d'$. That is, $x^1_{j}=0$ if $f_{j}=1$ and $x^1_{j}=e'_{j}$ otherwise. \\
$f'_{j}$  & Identifies whether $x_{j+d}$, if it is greater than $n+d'$. Here, $f'_{j}=0$ for all $j \in \{1, \dots , d\}$.\\
$x^2_{j}$  & Nullifies $x_{j+d}$ if it is greater than $n+d'$. That is, $x^2_{j}=0$ if $f'_{j}=1$ and $x^2_{j}=x_{j+d}$ otherwise. In this case $x^2_{j}=x_{j+d}$  for all $j \in \{1, \dots , d\}$. \\
$g_{j}$  & Sets $x_{j+2d}=B$, if $x_j \neq  x_{j+d}$ for a fixed $j \in \{1, \dots , d\}$. In this example $g_{2}=x_{8}=5$ and $g_{1}=g_{3}=B$. \\
$g'_{j}$  & Assigns a number from $0$ to $d-1$ to each value of $g_{j}$ to arrange them in ascending order. Here, $g'_{1}=1$, $g'_{2}=0$ and \combl{$g'_{3}=2$}. \\
$x^3_{j}$  & Arrange $g'_{j}$ in ascending order. 
In this case, $x^3_{1}=5$, $x^3_{2}=B$ and $x^3_{3}=B$.\\
$h_{i}$  & A variable that stores the labels to be inserted, i.e.,  $h_i =0$ for $i \le n$ and  $h_i = x^3_{j}$ for $i \geq {n+j}$. Here, $h=[0,0,0,0,0,5,B,B]$.\\
$u'_i$ & Entries of the resultant label column. The first $n$ entries are from the label column $L$, next $d'$ entries are the labels of the newly inserted vertices and last $d-d'$ entries are $B$. Here, $U' = (u'_i)=[3, 5, 4, 2, 4, 5, B, B]^T$.\\
$p_{ik}$ & Entries of a binary matrix which is $1$ if $i \in [n+1, n+d']$ and $k \le {n+d'}$ to indicate the rows corresponding to vertices in the padded adjacency matrix. In this case, $p_{6k}=1$ for $k \le 6$. All other values are $0$.\\
$p'_{ik}$ & Entries of a matrix which are $B$ when $p_{ik}=1$ and $0$ otherwise. Here $p'_{6k}=B$ for $k \le 6$.\\
$q_{ik}$ & Entries of a binary matrix which are $1$ if $k \in [n+1, n+d']$ and $i \le {n}$ to indicate the columns corresponding to new vertices in the padded adjacency matrix. Here, $q_{i6}=1$ for $i \le 5$. All other values are $0$.\\
$q'_{ik}$ & Entries of a matrix which are $B$ for $i$ and $k$ such that $q_{ik}=1$ and $0$ otherwise. In this example, $q'_{i6}=B$ for $i \le 5$.\\
$r_{ik}$ & A matrix with entries $0$ if $i\in [n+1, n+d']$, $k \le {n+d'}$ and for $k\in [n+1, n+d']$, $i \le {n}$. All other entries are same as padded matrix $V$. Here, $r_{i6}=0$ for \combl{$i \le 5$ and $r_{6k}=0$ for $k \le 6$}. The matrix $R=(r_{ik})$ is shown in Fig.~\ref{fig:Ins_N}.\\
$s^{j}_{ik}$ & For a fixed $j$,  $s^{j}_{ik}=1$ when  $x^1_j \neq  x^2_{j}$,  $i=x^1_j$ and $k=x^2_j$, or when  $i=x^2_j$ and $k=x^1_j$; otherwise $s^{j}_{ik}=0$. In this example only $s^{1}_{6,4}=s^{1}_{4,6}=1$.\\
$s'_{ik}$ & A binary variable which is $1$ whenever $s^{j}_{ik}=0$ and $0$ otherwise. Here, $s'_{6,4}=s'_{4,6}=1$ in the matrix $S'=(s'_{ik})$ as shown in Fig.~\ref{fig:Ins_N}\\
$v'_{ik}$  &Entry of the final padded adjacency matrix that is $1$ whenever $s'_{ik}=1$. All other entries are same as the padded adjacency $U$. In this example $v'_{6,4}=v'_{4,6}=1$ that represents newly inserted edge in the given graph $G$, as shown in Fig.~\ref{fig:Ins_N}.
\end{longtable}
\end{example}
\section{GE$_d$-generative ReLU}\label{sec:GED}
For a vertex-labeled graph $G$ with $n$ vertices, a label set $\Sigma = \{1, 2, \ldots, m\}$ with an arbitrary vertex sequence $v_1, v_2, \ldots, v_n$, 
and a non-negative integer $d$, we define a {\em GE$_d$-generative ReLU} to be a 
ReLU neural network that generates any graph over $\Sigma$ whose graph edit distance is at most $d$ from $G$ due to substitution of vertices and deletion and insertion of edges and/or vertices indicated by a random sequence $x=x_{1}, x_{2}, \ldots, x_{7d}$,
where each $x_j \in [0,1)$ has the form $x_j=i\cdot \Delta$, with $i$ an integer and 
$\Delta \leq 1$ a small positive constant. 
The sub-sequence $x_1, x_2, \ldots, x_{2d}$ (resp., $x_{2d+1}, x_{2d+2}, \ldots, x_{5d}$ and 
$x_{5d+1}, x_{5d+2}, \ldots,  x_{7d}$) corresponds to the substitution (resp., insertion and deletion) operations.
\combl{The conditions for the substitution, vertex/edge deletion, and vertex/edge insertion are explained in Sections~\ref{sec:GS}, \ref{sec:GD}, and~\ref{sec:GI}, respectively.} 
As a preprocessing step, the random inputs 
$x_j$ are converted into integers considering their corresponding operations as follows
\begin{enumerate}
\item[-] $x_j$ for $1 \leq j \leq d$ or $5d+1 \leq j \leq 7d$  indicates indices of vertices for substitution and deletion operations, and are converted into integers in 
$ \{0, \ldots, n\}$ if $x_j \in \big((i-1)/n, i/n\big]$, 
\item[-] $x_j$ for \combl{$2d+1 \leq j \leq 4d$}, indicates indices of vertices for insertion, and are converted into integers in $\{0, \ldots, n+d-1\}$ if $x_{j} \in \big((i -1)/{n+d-1},
 i/{n+d-1}\big]$,
\item[-] $x_j$ for $d+1 \leq j \leq 2d$ or $4d+1 \leq j \leq 5d$ indicates labels for substitution and insertion, and are converted into integers in $\Sigma$ if 
$x_{j} \in \big((i -1)/m, i/m\big]$, for $i = 1$, 
$x_j \in \big[(i -1)/m, i/m\big]$ otherwise. 
\end{enumerate}
For example, when $n = 5$, $d=3$ and $m  = 10$, the conversion scheme is given in Table~\ref{tab:conv}. 
To output a fixed number of nodes, we assume that the label matrix $L$ (resp., adjacency matrix $A$) of a given graph $G$ is padded with $2d$ $B$s (resp., $2d$ of rows and column with all $B$ entries) to get  the matrices $U$ (resp., $V$), where $B \gg \max(m,n)$.
The required label and adjacency matrices, $L'$ and $A'$ resp., can be obtained by removing $B$ from $U'$ and $V'$. 
Moreover, if $d_1$, $d_2$ and $d_3$ are the number of operations performed for substitution, insertion and deletion resp., therefore $d_1 + d_2 + d_3 \leq d$.
\begin{table}[h!]
\setlength{\tabcolsep}{0pt}
\centering
\begin{tabular}{|c|c|c|}\hline
\makecell{~For positions $x_j$ with ~\\$1 \leq j \leq d~$~\\ and $5d+1 \leq j \leq 7d$}& \makecell{~For positions $x_j$ with ~\\$2d+1 \leq j \leq 4d$~}& \makecell{~For values $x_j$ with~\\ $d+1 \leq j \leq 2d$ and ~\\$4d+1 \leq j \leq 5d$~}\\\hline
$~(-1/5, 0] \rightarrow 0$ & ~~$(-1/7, 0] \rightarrow 0$  & ~~~~~~~$[0, 1/10] \rightarrow 1$\\
$~~~(0, 1/5] \rightarrow 1$ & ~~~~~$(0, 1/7] \rightarrow 1$ & ~~$(1/10, 2/10] \rightarrow 2$\\
$~~(1/5, 2/5] \rightarrow 2~~$ & ~~$(1/7, 2/7] \rightarrow 2$   & ~~$(2/10, 3/10] \rightarrow 3$\\
$~~(2/5, 3/5] \rightarrow 3~~$ & ~~$(2/7, 3/7] \rightarrow 3$  & ~~$(3/10, 4/10] \rightarrow 4$\\
$~~(3/5, 4/5] \rightarrow 4~~$ & ~~$(3/7, 4/7] \rightarrow 4$ & ~~$(4/10, 5/10] \rightarrow 5$\\
$~~(4/5, 5/5] \rightarrow 5~~$ & ~~$(4/7, 5/7] \rightarrow 5$  & ~~$(5/10, 6/10] \rightarrow 6$ \\
 & ~~$(5/7, 6/7] \rightarrow 6$ & ~~$(6/10, 7/10] \rightarrow 7$\\
 & ~~$(6/7, 7/7] \rightarrow 7$ & ~~$(7/10, 8/10] \rightarrow 8$ \\
& ~~& ~~$(8/10, 9/10] \rightarrow 9$\\
& ~~& ~~~~$(9/10, 10/10] \rightarrow 10~~$ \\\hline
\end{tabular}
\caption{Conversion table from real numbers to integers.}
\label{tab:conv}
\end{table}
The existence of GE$_d$-generative ${\rm ReLU}$ networks is discussed in Theorem~\ref{thm:Enn}. 
\begin{theorem}
\label{thm:Enn}
Given a vertex-labeled graph $G$ with $n$ vertices over the alphabet set
$\Sigma = \{1, 2, \ldots, m\}$, and a non-negative integer $d$, 
there exists a GE$_d$-generative ${\rm ReLU}$ network with size $\mathcal{O}(n^2d)$ and constant depth.
\end{theorem}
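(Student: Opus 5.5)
The plan is to build the GE$_d$-generative ReLU network as a constant-length composition of four blocks. The first is a preprocessing (quantization) block. The remaining three are the networks of Theorems~\ref{thm:Gsnn}, \ref{thm:Ginn} and \ref{thm:Gdnn}, applied in this order: substitution, then insertion, then deletion. Each block has constant depth and size $\mathcal{O}(n^2d)$, up to the enlarged dimension $n+2d$. Since we may assume $d\le n$ (otherwise replace $d$ by a suitable bound, or note $(n+2d)^2 d=\mathcal{O}(n^2d)$ when $d=\mathcal{O}(n)$), the composition then has constant depth and size $\mathcal{O}(n^2d)$.

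\textbf{Step 1 (conversion).} Each real input $x_j=i\Delta$ is mapped to an integer according to Table~\ref{tab:conv}. For instance, a vertex index in $\{0,\dots,n\}$ is obtained as $\sum_{t=0}^{n-1}[x_j> t/n]$, and the other two ranges are handled analogously with $n+d-1$ and $m$. Because inputs lie on the grid $i\Delta$, each strict threshold $[x>\theta]$ can be written as $[x\ge \theta+\Delta']$ for a small enough $\Delta'$. Hence it is a difference of two ReLUs scaled by $1/\Delta'$, in the spirit of \cite[Propositions~1 and~2]{MT2024}. This costs $\mathcal{O}(n+d+m)$ units per input and $\mathcal{O}(d(n+m))$ units in total, all in constant depth. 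Here I treat $m$ as bounded by the size budget, or observe that labels can be decoded with $\mathcal{O}(m)$ units per entry, which is absorbed.

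\textbf{Step 2 (chaining the blocks).} The substitution equations (\ref{eqe1})--(\ref{eql1}) are applied to $L$ with the decoded $x_1,\dots,x_{2d}$, leaving $A$ unchanged. The result, together with $A$, is then padded with $2d$ copies of $B$. The insertion system (\ref{eqe2})--(\ref{eqv'2}) is run on $x_{2d+1},\dots,x_{5d}$. This uses $n+d$ as the effective size for the first $d$ padding slots and leaves the remaining $d$ padded slots untouched. This produces $U',V'$ of dimension $n+2d$ whose live part has $n+d_2'$ vertices, where $d_2'$ is the number of vertex insertions. Finally the deletion system (\ref{eqb0})--(\ref{eqy0}) is run on $x_{5d+1},\dots,x_{7d}$, but with the ranges of $i,k$ extended to the current live size and the padded $B$ entries treated as in Theorem~\ref{thm:Gdnn}.

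The main obstacle is making the later blocks work on a \emph{data-dependent} live size $n+d_2'$, instead of the fixed $n$ assumed in Theorem~\ref{thm:Gdnn}. For example, the isolation test in Eq.~(\ref{eqt''0}) sums over $k\le n$, and the deletion indices were decoded into $\{0,\dots,n\}$. I would handle this in two ways. First, the isolation test sums $\mathrm{ReLU}(t'_{ik}-C\,[t'_{ik}\ge B])$ over all $n+2d$ columns, so that padded $B$ entries contribute zero. Second, I would reuse the refinement mechanism of Table~\ref{tab:reset}, which zeroes any deletion index exceeding the live size, computed as $n+\sum_j \delta(x_{2d+j},x_{3d+j})$. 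The row/column shifting of Eqs.~(\ref{eqf0})--(\ref{eqy0}) already tolerates at most $d$ shifts, so it carries over with $d+1$ shift channels.

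\textbf{Step 3 (correctness).} I would check that every edit path with $d_1+d_2+d_3\le d$ operations is realized by some input, ordered as substitutions, insertions, then deletions. Any optimal edit path can be reordered this way without increasing cost, since substituting a vertex that is later deleted is never needed and insertions precede deletions of inserted edges only trivially. Conversely, every input yields a graph at distance at most $d$, because each block performs at most one valid operation per index $j$ and invalid ones are nullified. Removing the $B$ entries from $U'$ and $V'$ then gives $L'$ and $A'$.
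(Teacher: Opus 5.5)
\textbf{Gap: no global budget across the three blocks.} Your Step 3 claim that every input yields a graph within distance $d$ fails for the network you built. Each of the three blocks has $d$ operation slots, and nothing in your construction couples them. A single input can therefore activate up to $d$ substitutions, $d$ insertions and $d$ deletions, i.e.\ up to $3d$ edit operations. Your justification (``each block performs at most one valid operation per index $j$'') only bounds the output's distance by $3d$.

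Concretely, take $d=1$ and let $G$ be three isolated vertices with label $1$. Leave the substitution slot idle, use the insertion slot to add the edge $v_2v_3$, and use the deletion slot to delete the isolated vertex $v_1$. The output has two vertices and one edge. Both a vertex deletion and an edge insertion are unavoidable, so its edit distance from $G$ is $2>d$. Your network thus produces graphs outside the required set.

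\textbf{How the paper closes it.} The paper's proof has an explicit step, ``Removal of extra edit operations'', that enforces $d_1+d_2+d_3\le d$:
\begin{itemize}
\item After decoding and removing repeated slots, it computes an indicator $t_j$ that slot $j$ is active: a nonzero index for substitution, both indices nonzero for insertion or deletion.
\item Over the concatenated substitution, insertion and deletion slots it thresholds the prefix counts, $t'_j=\big[\sum_{k\le j}t_k\ge d+1\big]$.
\item It nullifies over-budget slots via $w_j=\max(x''_j-C\,t'_j,0)$.
\end{itemize}
This is constant depth and $\mathcal{O}(d^2)$ size, so it fits your budget.

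A nullified slot must also be a genuine no-op. In the insertion block, a pair $(0,0)$ satisfies $\delta(x_j,x_{j+d})=1$, so it would be read as a vertex insertion and would inflate $d'$. That is why the paper replaces a zeroed insertion index by $B$ before running the insertion equations.

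With this counter added, the rest of your route agrees with the paper's proof: your conversion, the substitution--insertion--deletion chaining, and the reordering argument for covering all graphs within distance $d$. Your masked isolation test over all $n+2d$ columns is in fact more careful than reusing the test of Theorem~\ref{thm:Gdnn} verbatim, which sums only over $k\le n$ and so ignores edges to newly inserted vertices.
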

\begin{proof}
Let $G$ and $G'$ be two vertex-labeled graphs with $n$ vertices such that
$G'$ can be constructed from $G$ through the graph edit operations (substitution, insertion and deletion) indicated by a sequence $x_1, x_2, \ldots, x_{7d}$. 
We claim that the process of obtaining $G'$ from $G$ can be simulated with the following system of equations, where $B$ and $C$ are large numbers with $C \gg B \gg \max(m,n)$.\bigskip\\
{Conversion of input into integers:}
First, convert the input $x_j$ into integers by using the following equations. 
\begin{align} 
p_{i}^{j} &= \left[ (i-1)/n \leq x_{j} \leq i/ n \right] -  \delta(x_j, (i-1)/n),  
 \nonumber\\
&~~~~ \text{~for~} i \in \{0, 1, \ldots, n\}, j \in \{1, \ldots, d, 5d+1, \ldots, 7d\}, \label{eqp5}\\
q_{i}^{j} &= \left[ (i-1)/{n+d-1} \leq x_{j} \leq i/{n+d-1} \right] -  \delta(x_j, (i-1)/{n+d-1}),  
 \nonumber\\
&~~~~ \text{~for~} i \in \{0, 1, \ldots, {(n+d-1)}\}, j \in \{2d+1, \ldots, 4d \}, \label{eqq5}\\
r_{i}^{j} &= 
\begin{cases}
\left[ (i-1)/m \leq x_{j} \leq {i}/ m \right] ~~~~~~~~~\text{~if~}  {i} =1, \\[5pt]
\left[ (i-1)/m \leq x_{j} \leq {i}/ m \right] -   ~~~~~\text{~if~}  
{i} \in \{2, \ldots, m\},  \\
~~\delta(x_j, (i-1)/m)  , \label{eqr5}
\end{cases}
 \nonumber\\
&~~~~ \text{~for~}  j \in \{d+1, \ldots, 2d, 4d+1, \ldots, 5d\},\\
p'_{j} &= \sum^{n}_{i=0} p_{i}^{j} \cdot i  \text{~for~} 
 j \in \{1, \ldots, d, 5d+1, \ldots, 7d\}, \label{eqp'5}\\
q'_{j} &= \sum^{n+d-1}_{i=0} q_{i}^{j} \cdot i  \text{~for~} 
 j \in \{2d+1, \ldots, 4d\}, \label{eqq'5}\\
r'_{j} &=\sum^{m}_{i=1}  r_{i}^{j} \cdot {i}  \text{~for~}  j \in \{d+1, \ldots, 2d, 4d+1, \ldots, 5d\}. \label{eqr'5}
\end{align}
The variables $p_{i}^{j}$, $q_{i}^{j}$ and $r_{i}^{j}$ indicate whether the $j$-th input lies in the $i$-th interval, for $i \in \{0, \ldots, n\}$, $i \in \{0, \ldots, {n+d-1}\}$ and $i\in \{1, \ldots, m\}$, respectively. The variables $p'_j $, $q'_j $ and $r'_j$ represent the integer values corresponding to $x_j$ within their respective intervals. Therefore, the input in integer form is given as:
\begin{align}
x'_{j} &= 
\begin{cases}
p'_j  ~~~\text{~for~}  j \in \{1, \ldots, d\}, \\[5pt]
r'_j  ~~~\text{~for~}  j \in \{d+1, \ldots, 2d\}, \\[5pt]
q'_j ~~~\text{~for~}  j \in \{2d+1, \ldots, 4d\},\\[5pt]
r'_j ~~~\text{~for~}  j \in \{4d+1, \ldots, 5d\},\\[5pt] 
p'_j ~~~\text{~for~}  j \in \{5d+1, \ldots, 7d\}. \label{eqp5}
\end{cases}
\end{align}
\bigskip\\
{Elimination of invalid inputs: }
To avoid redundant substitution, insertion, and deletion operations, we first eliminate repeated inputs.
Since $x'_j$, $1 \leq j \leq 2d$, is responsible for substitution operations, the repetition from $x'_j$, $1 \leq j \leq d$ is removed by using  $e_j$ of Eq.~(\ref{eqe1}),
and variables $e_{jk}$ and $e'_j$ of Eqs.~(\ref{eqe2}), ~(\ref{eqe'2}) are used to remove the repetition from $x'_j$, $2d+1 \leq j \leq 3d$, the sequence that is used in the insertion operation. To handle the redundant deletion operation, repetition from $x'_j$, $5d+1 \leq j \leq 6d$ is removed by using the following equations.
\begin{align}
s_{jk} &= \max \Big( \delta(x'_{j}, x'_{k}) \land \delta(x'_{j+d}, x'_{k+d}), 0  \Big)  \text{~for~}  j \in \{5d+1, \ldots, 6d\}.   \label{eqs5}\\
s'_{j} &= \max \Big( x'_{j} -  C \cdot \sum_{k=1}^{j-1} s_{jk}, 0  \Big)  \text{~for~}  j \in \{5d+1, \ldots, 6d\}.   \label{eqs'5}
\end{align}
Let $x''$ denote the resultant sequence such that:
\begin{align}
x''_{j} &= 
\begin{cases}
e_j  ~~~\text{~for~}  j \in \{1, \ldots, d\}, \\[5pt]
x'_j  ~~~\text{~for~}  j \in \{d+1, \ldots, 2d\}, \\[5pt]
e'_j ~~~\text{~for~}  j \in \{2d+1, \ldots, 3d\},\\[5pt]
x'_j  ~~~\text{~for~}  j \in \{3d+1, \ldots, 5d\}, \\[5pt]
s'_j ~~~\text{~for~}  j \in \{5d+1, \ldots, 6d\}, \\[5pt]
x'_j ~~~\text{~for~}  j \in \{6d+1, \ldots, 7d\}.  \label{eqx''5}
\end{cases}
\end{align}
\bigskip\\
{Removal of extra edit operations: }
Since $d_1 + d_2 + d_3$ should not exceeds $d$, there may exist some excess edit operations.
To find the number of such operations, the following equations are used.   
\begin{align}
t_{j} &= 
\begin{cases}
\max \Big(1- \delta(x''_j, 0), 0 \Big)   ~~~\text{~for~}  j \in \{1, \ldots, d\}, \\[10pt]
\max \Big(1- \big(\delta(x''_j, 0) + \delta(x''_{j+d}, 0) \big), 0 \Big)   ~~~\text{~for~}  j \in \{2d+1, \ldots, 3d\},\\[10pt]
\max \Big(1- \big(\delta(x''_j, 0) + \delta(x''_{j+d}, 0) \big), 0 \Big)   ~~~\text{~for~}  j \in \{5d+1, \ldots, 6d\}  \label{eqt5},
\end{cases}\\[10pt]
t'_{j} &=
\left[ \sum^{j}_{k=1} t_{k} \geq d+1 \right]  \text{~for ~}  j \in \{1, \ldots, d, 2d+1, \ldots, 3d, 5d+1, \ldots, 6d\}. \label{eqt'5}
\end{align}
The excess positions can be removed by assigning them value 0 as follows: 
\begin{align}
w_{j} &= \max (x''_j - C \cdot t'_{j} , 0)  \nonumber\\
&~~~~  \text{~for~}  j \in \{1, \ldots, d, 2d+1, \ldots, 3d, 5d+1, \ldots, 6d\}. \label{eqw5}
\end{align}
Because in the insertion portion, $x_j=x_{j+d}=0$ which results in  $\delta(x_j, x_{j+d})=1$. To handle this issue, replace $0$ with $B$ in $w_j$ for $2d+1 \leq j \leq 3d$ using the following equations:
\begin{align}
w'_j &= \max \Big( w_{j} - C\cdot \delta(w_{j}, 0), 0 \Big) 
+ \max \Big( B -C (1-\delta(w_{j}, 0)), 0 \Big). \label{eqw'5}
\end{align}
The preprocessed input $X_j$ for edit operations is finally obtained as follows:
\begin{align}
X_{j} &= 
\begin{cases}
w_j  ~~~\text{~for~}  j \in \{1, \ldots, d\}, \\[2pt]
x''_j  ~~~\text{~for~}  j \in \{d+1, \ldots, 2d\}, \\[2pt]
w'_j ~~~\text{~for~}  j \in \{2d+1, \ldots, 3d\},\\[2pt]
x''_j ~~~\text{~for~}  j \in \{3d+1, \ldots, 5d\},\\[2pt] 
w_j ~~~\text{~for~}  j \in \{5d+1, \ldots, 6d\},\\[2pt]
x''_j ~~~\text{~for~}  j \in \{6d+1, \ldots, 7d\}. \label{eqX5}
\end{cases}
\end{align}
\bigskip\\
{Application of edit operations:} 
Apply substitution operations on padded label and adjacency matrices $U$ and $V$ resp,. by using Eqs.~(\ref{eqf1})-(\ref{eql1}) of Theorem~\ref{thm:Gsnn} and  $X_{j}$, $j \in \{1, \ldots, 2d\}$ as inputs to get the matrices $U^1$ and $V^1$. 
Apply insertion operations on $U^1$ and $V^1$ by using 
Eqs.~(\ref{eqg3})-~(\ref{eqv'2}) of Theorem~\ref{thm:Ginn}, with $X_{j}$, $j \in \{2d+1, \ldots, 5d\}$, to get $U^2$ and $V^2$. 
Apply deletion operations on $U^2$ and $V^2$ according to Theorem~\ref{thm:Gdnn} and $X_{j}$, $j \in \{5d+1, \ldots, 7d\}$ 
to get $U'$ and $V'$.
Finally, the label and adjacency matrices $L'$ and $A'$ of the required graph $G'$ can be obtained by eliminating $B$s from $U'$ and $V'$.

All equations utilize the maximum function, $\delta$ or $[a \geq \theta]$ function, and therefore, by  Theorems~\ref{thm:Gdnn},~\ref{thm:Gsnn} and~\ref{thm:Ginn}, there exists a $GE_d$-generative ${\rm ReLU}$ network of size $\mathcal{O}(n^2 d)$ and constant depth.
\end{proof}
\begin{example}
\label{exa:Uni}
Reconsider the graph $G$ shown in Fig.~\ref{fig:G_r}, $d = 3$, $m = 10$, and input 
$x=0.45, 0, 0.59, 0, 0.4, 0.15, 0.11, 0.05, 0.88$, $0.55, 0.44, 0.93, 0.52$, 0.87, 0.03, 0.33, 0.4, 0, 0.79, 0.65, 0.9, 
\combl{where the first $2d$ entries are used for substitution, the next $3d$ entries are used for insertion, and the final $2d$ entries are used for deletion operations.
In the first step, decimals are converted into integers using the conversion Table~\ref{tab:conv} to obtain the vector $x' = [3, 0, 3, 1, 4, 2, 1, 1, 7, 4, 4, 0, 6, 9, 1, 2, 2, 5, 4, 4, 5]$.
For example, the index entry (resp., value/label entry) $x_1 = 0.45$ (resp., $x_4 = 0$) belongs to $(2/5, 3/5]$ (resp., $[0, 1/10]$), and therefore $x'_1 = 3$ (resp., $x'_4 = 1$).
In the second step, repetitions are removed from the indices for substitution, insertion, and deletion to obtain $x'' = [3, 0, 0, 1, 4, 2, 1, 0, 7, 4, 4, 0, 6, 9, 1, 2, 0, 5, 4, 4, 5]$.
For example, the indices $x'_1 = x'_3 = 3$ correspond to substitution; therefore, $x''_3 = 0$ to ignore it. Similarly, the index pairs $(x'_7, x'_{10}) = (x'_8, x'_{11}) = (1, 4)$ correspond to insertion; therefore, $x''_8 = 0$ to ignore it. Likewise, $(x'_{16}, x'_{19}) = (x'_{17}, x'_{20}) = (2, 4)$ correspond to deletion; therefore, $x''_{17} = 0$ to ignore it.
In the third step, edit operations exceeding $d$ are ignored to obtain $X = [3, 0, 0, 1, 4, 2, 1, B, 7, 4, 4, 0, 6, 9, 1, 2, 0, 0, 4, 4, 5]$.
For example, the indices $x''_1 = 3$, $(x''_7, x''_{10}) = (1, 4)$, and $(x''_{16}, x''_{19}) = (2, 4)$ are the only valid substitution, insertion, and deletion operations, respectively, until the fourth valid edit operation (a deletion) $(x''_{18}, x''_{21}) = (5, 5)$, the count of which exceeds $d = 3$; hence, $X_{18} = 0$ to ignore it.
Finally, zero indices corresponding to insertion operations are set to $B$, as the proposed network for insertion cannot handle zero indices. In this case, the index $x''_{8} = 0$ corresponds to insertion and is therefore 
set to $X_{8} = B$ so that it can be ignored. }
We illustrate the process of obtaining these vectors to get $G'$ by using Theorem~\ref{thm:Enn}. 
The meanings of Eqs.~(\ref{eqp5})-(\ref{eqX5}) are explained in Example~\ref{exa:Uni}, while the details of the deletion, substitution, and insertion operations are given in Examples~\ref{exa:Del},~\ref{exa:sub} and~\ref{exa:ins}.
\begin{longtable}{c p{14cm}}
${p}^j_i$ & This variable specifies the interval for the position $x_j $, where $ j \in \{1, \ldots, d, 5d+1, \ldots, 7d\}$. ${p}^j_i=1$ means that $x_j$ lies in the $i$-th interval, i.e., the interval 
$( (i-1)/n , i/n]$. In this case ${p}^{1}_3={p}^{2}_0={p}^{3}_3$~$={p}^{16}_2={p}^{17}_2={p}^{18}_5={p}^{19}_4={p}^{20}_4={p}^{21}_5=1$. All other values are zero. \\
${q}^j_{i}$  & Specifies the interval for each label $x_j$, where
 $j \in \{2d+1, \ldots, 4d\}$.  
 ${q}^j_{i}=1$ means that the $j$-th input lies in the ${i}$-th interval, i.e., $x_j$ lies in $[(i-1)/(n+d-1), i/(n+d-1)]$. 
 In this case, ${q}^{7}_1={q}^{8}_1={q}^{9}_7={q}^{10}_4={q}^{11}_4={q}^{12}_0=1$, and all other values are zero.\\
${r}^j_{i}$  & It specifies the interval for each label $x_j$, where
 $ j \in \{d+1, \ldots, 2d, 4d+1, \ldots, 5d\}$.  
 ${r}^j_{i}=1$ means that the $j$-th input lies in the ${i}$-th interval, i.e., $((i-1)/m, i/m]$. In this case, ${r}^{4}_1={r}^{5}_4={r}^{6}_2={r}^{13}_6={r}^{14}_9={r}^{15}_1=1$, and all other values are zero.\\
$p'_{j}$  & This variable assigns each position $x_j$ an integer $i$ if $x_j$ belongs to the $i$-th interval, i.e., if ${p}^j_i = 1$ then ${p'}_j = i$. Here, $p'_{1}=3$, $p'_{2}=0$, $p'_{3}=3$, $p'_{16}=2$, $p'_{17}=2$, $p'_{18}=5$, $p'_{19}=4$, $p'_{20}=4$, $p'_{21}=5$. \\
$q'_{j}$  & It assigns each position $x_j$ an integer $i$ if $x_j$ belongs to the $i$-th interval, i.e., if $q_i^j = 1$ then $q'_j=i$. In this example, $q'_{7}=1$, $q'_{8}=1$, $q'_{9}=7$, $q'_{10}=4$, $q'_{11}=4$, $q'_{12}=0$.\\
$r'_{j}$ & This variable assigns each label $x_j$ an integer $i$ if $x_j$ belongs to the $i$-th interval, i.e., if $r_i^j = 1$ then $r'_j=i$. In this example, $r'_{4}=1$, $r'_{5}=4$, $r'_{6}=2$, $r'_{13}=6$, $r'_{14}=9$, $r'_{15}=1$.\\
$x'_{j}$ & It combines the conversions of all parts of the input. Therefore, $x'=[3, 0, 3, 1, 4, 2, 1, 1, 7, 4, 4, 0, 6, 9, 1, 
2, 2, 5, 4, 4, 5]$.\\
$s_{jk}$  & To avoid the extra deletion operations, this variable indicates the indices in the deletion portion that are repeated. $s_{jk} = 1$ if $x'_j=x'_k$ and $x'_{j+d}=x'_{k+d}$. 
In this case $s_{16,16}=s_{16,17}=s_{17,17}=1$. For all other values $s_{jk}=0$.\\
$s'_{j}$  & It nullifies the effect of repeated index by converting it into $0$. In this case, $s'_{17}=0$. All other values are same as $x'_j$.\\
$x''_{j}$  &It combines the integer conversion of the input after the removal of repeated indices for each edit operation. 
In this case, $x''=[3, 0, 0, 1, 4, 2, 1, 0, 7, 4, 4, 0, 6, 9, 1, 
2, 0, 5, 4, 4, 5]$.\\
$t_{j}$  & This variable indicates the possible indices where the edit operation can be applied. Here, for substitution (resp., insertion and deletion) there is only one (resp., one and two) non-zero index on which edit operation can be applied, i.e., $x''_1=3$ (resp., $x''_7=1$ and $x''_{16}=2$, $x''_{18}=5$). Therefore,  $t_{1}=t_{7}=t_{16}=t_{18}=1$. All other values are $0$.\\
$t'_{j}$  & Since $d_1+d_2+d_3 \leq d$, this variable indicates the exceeded indices. Thus, $t'_{jk} = 1$ if sum of number of indices on which edit operation can be applied exceeds $d$.
In this case $t'_{18}=1$. For all other values $t'_{j}=0$.\\
$w_{j}$  &This variable nullifies the exceeded indices. Thus, $w_j=0$ if $t'_{j}=1$.
In this case $w_{18}=0$. All other values are the same as $x''_j$.\\
$w'_{j}$  & This variable is used to ignore the effect of $x_j=x_{j+d}=0$ which would otherwise result in  $\delta(x_j, x_{j+d})=1$ in the insertion case,  by replacing $0$ with $B$ in $w_{j}$ for $2d+1 \leq j \leq 3d$. Thus, $w'_{8} = B$. All other $j$, $w'_{j}=w_{j}$.\\
$X_{j}$  &This variable gives the final input for all three edit operations. Thus, $X=[3, 0, 0, 1, 4, 2, 1, B, 7, 4, 4, 0, 6, 9, 1, 
2, 0, 0, 4, 4, 5]$.
\end{longtable}
\end{example}
\section{Computational Results and Discussion} \label{sec:exp}
To evaluate the scalability of the proposed networks for generating graphs with the number of vertices $n$  and a desired graph edit distance $d$, we conducted a series of computational experiments by varying both parameters to generate graphs by performing insertion, deletion and substitution operations using the proposed ${\rm GE}_d$ network. 
The experiments were performed on a Linux-based server equipped with an Intel Xeon Gold 5222 CPU (3.80 GHz, 16 cores). The server also contains an NVIDIA A100 PCIe GPU with 40 GB of memory (CUDA 11.2), although the experiments reported in this study were conducted using the CPU only. 
In these experiments, the number of vertices $n$ ranged from 100 to 1400, while the desired graph edit distance $d$ ranged from 10 to 140. For each pair $(n, d)$, we measured the computational time in seconds required by the proposed networks to generate a graph satisfying the specified edit distance constraint. 
In our experimental design, we considered parameter combinations under the condition $d \leq n$. 
Consequently, the parameter pairs such as $d = 110$ with $n = 100$ were not evaluated. The experiments were conducted progressively by increasing both $n$ and $d$ in order to examine how the computational cost grows with the problem size. For each value of $d$, the number of vertices was increased until the computation encountered memory limitations. When the required memory exceeded the available resources, the corresponding experiment could not be completed, and larger instances for that configuration were not tested. For example, when $d = 120$ and $n = 300$, the computation resulted in a memory-out issue; therefore, larger instances for this parameter setting were not evaluated. 

The results are presented in Table~\ref{tab:time_table} and Fig.~\ref{fig:time_plot} which indicate that the computational time increases steadily with the number of vertices $n$. For example, when $d = 10$, the running time grows from 6 seconds for $n = 100$ to 1250 seconds for $n = 1400$. 
A similar trend is observed when the number of vertices is fixed and the edit distance increases. For instance, when $n = 200$, the running time rises from 19 seconds for $d = 10$ to 1560 seconds for $d = 140$, indicating that larger edit distances require substantially more computation.
Comparing these two observations shows that the increase in running time is more pronounced when $d$ grows while $n$ is fixed, whereas the growth is more gradual when $n$ increases for a fixed $d$. This suggests that the desired edit distance has a stronger influence on the computational effort, as larger values of $d$ significantly expand the space of feasible graph transformations that must be explored.
Overall, the experiments show that the proposed method can handle graphs with up to 1400 vertices for smaller edit distances. However, the computational cost increases considerably as $d$ grows, limiting the range of instances for larger edit distances. 
\begin{table}[ht]
\centering
\small
\setlength{\tabcolsep}{3.8pt}{
\begin{tabular}{|c|cccccccccccccc|}
\hline
$d \backslash n$ 
& 100 & 200 & 300 & 400 & 500 & 600 & 700 & 800 & 900 & 1000 & 1100 & 1200 & 1300 & 1400 \\
\hline
10  & 6 & 19 & 42 & 74 & 123 & 173 & 245 & 337 & 511 & 697 & 816 & 949 & 1090 & 1250 \\
20  & 14 & 42 & 87 & 146 & 239 & 333 & 444 & 633 & 946 & 1288 &  &  &  &  \\
30  & 27 & 73 & 141 & 229 & 376 & 527 & 668 & 953 &  &  &  &  &  &  \\
40  & 46 & 112 & 204 & 328 & 528 & 757 & 916 & 1298 &  &  &  &  &  &  \\
50  & 73 & 162 & 283 & 448 & 705 & 1013 & 1199 & 1739 &  &  &  &  &  &  \\
60  & 109 & 223 & 373 & 574 & 902 & 1258 &  &  &  &  &  &  &  &  \\
70  & 159 & 301 & 481 & 767 & 1134 &  &  &  &  &  &  &  &  &  \\
80  & 222 & 390 & 616 & 986 & 1416 &  &  &  &  &  &  &  &  &  \\
90  & 304 & 503 & 784 & 1225 &  &  &  &  &  &  &  &  &  &  \\
100 & 406 & 643 & 1022 &  &  &  &  &  &  &  &  &  &  &  \\
110 &  & 813 & 1261 &  &  &  &  &  &  &  &  &  &  &  \\
120 &  & 1009 &  &  &  &  &  &  &  &  &  &  &  &  \\
130 &  & 1249 &  &  &  &  &  &  &  &  &  &  &  &  \\
140 &  & 1560 &  &  &  &  &  &  &  &  &  &  &  &  \\
\hline
\end{tabular}
}
\caption{Computation time (sec.) for different values of $n$ and $d$ to generate 
desired graphs with the proposed network GE$_d$. }
\label{tab:time_table}
\end{table}
 \begin{figure*}[t!]
	\centering
	\includegraphics[scale = 0.55]{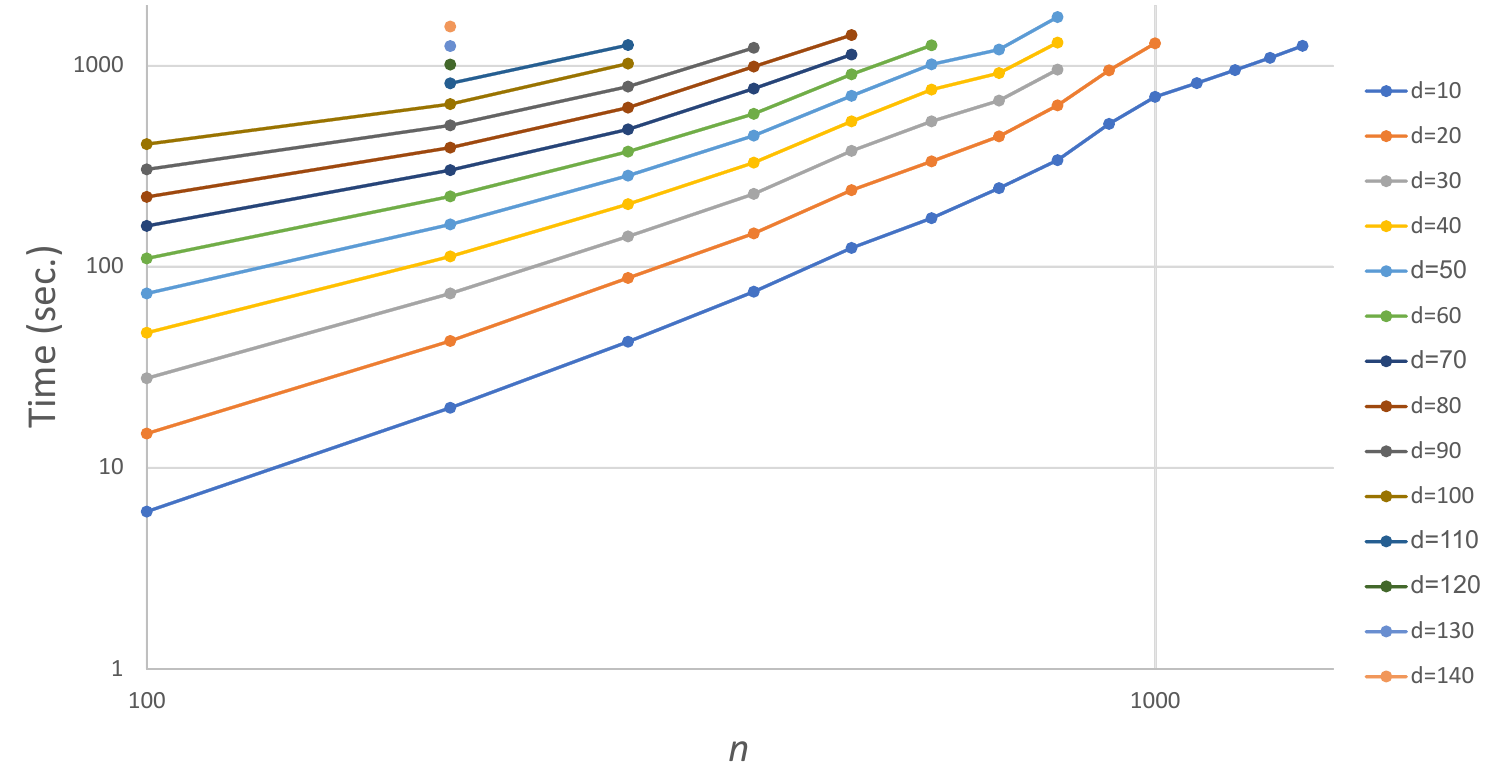}
	\caption{{Log-log plot between $n$ and computation time for different edit distance $d$.
 } }\label{fig:time_plot}
\end{figure*}

Additionally, we conducted a comparative analysis by generating graphs with a given edit distance using two state-of-the-art graph generative models, GraphRNN by You et al.~\cite{You2018GraphRNN} and GraphGDP by Huang et al.~\cite{Huang2022GraphGDP}. 
For this purpose, we selected six input graphs with the number of vertices $n = 10, 20, 30, 40, 50, 100$ and the number of edges $|E| = 9, 20, 130, 250, 918, 1536$, respectively. For each graph, we generated graphs with at most edit distance $d = 5, 10, 15, 20, 25, 50$, respectively, using the proposed network  GE$_d$  as well as GraphRNN and GraphGDP. 

To evaluate the quality of the generated graphs, we approximated the graph edit distance between the generated graphs and the corresponding input graph. For a consistent approximation, we considered unlabeled graphs and allowed only edge deletion and insertion operations when generating graphs with the proposed network GE$_d$. Under this setting, all generated graphs must have the same number of vertices as the input graph and the number of edges must lie within the range $[|E|-d, |E|+d]$, and graph edit distance at most $d$. 
\combl{Furthermore, the edit distance between the underlying graphs is the symmetric difference of the edges of the graphs~\cite{martin2013edit}}. 

For each triplet $(n, |E|, d)$, 500 unlabeled graphs were generated using the proposed network GE$_d$. 
\combl{The proposed GE$_d$ network requires no training, unlike GraphRNN and GraphGDP.}
GraphRNN (dependent Bernoulli variant) and GraphGDP were trained on graphs obtained by GE$_d$ using a 4-layer RNN and a 4-layer GNN, respectively, each with hidden neuron size 128. Training was performed for approximately 3000 epochs with batch size 32, using learning rates of 0.003 for GraphRNN and 0.00002 for GraphGDP. For both models, 80\% of the generated graphs were used for training and the remaining 20\% for testing, while the default settings were used for the other parameters.
After training, 500 graph samples were generated for each input graph using GraphRNN~\cite{You2018GraphRNN} and GraphGDP~\cite{Huang2022GraphGDP}. The graph edit distances between these generated graphs and the corresponding input graph were approximated. 
The number of edges and the graph edit distance of the graphs with 
$n$ vertices generated by these models are illustrated in Fig.~\ref{fig:valid_graph}. 
A summary of these results is given in Table~\ref{tab:comparison}, which reports the number $N_n$ of generated graphs with $n$ vertices, the number $N_{|E|}$ of graphs whose edge counts lie within the acceptable range $[|E|-d, |E|+d]$, and the number $N_d$ of graphs whose graph edit distance from the input graph is at most~$d$.

The results show that the proposed method GE$_d$ consistently satisfies all the required constraints. For every tested configuration, all 500 generated graphs preserve the number of vertices ($N_n = 500$), have edge counts within the acceptable range ($N_{|E|} = 500$), and achieve the desired graph edit distance ($N_d = 500$). This indicates that the proposed approach reliably generates graphs that strictly satisfy the structural and edit distance constraints imposed by the input graph.

In contrast, the graphs generated by GraphRNN frequently violate these constraints. Although GraphRNN occasionally produces graphs with acceptable edge counts or vertex numbers, none of the generated graphs achieve the required edit distance ($N_d = 0$ for all cases). Moreover, the number of valid graphs with respect to vertex and edge constraints decreases as the graph size increases. For example, when $n = 100$, only 438 out of 500 graphs preserve the correct number of vertices and none satisfy the edge constraint, indicating a significant degradation in structural consistency for larger graphs.

GraphGDP performs slightly better in preserving the number of vertices, with $N_n = 500$ for all configurations, and often produces graphs whose edge counts fall within the acceptable range. However, similar to GraphRNN, none of the generated graphs satisfy the required edit distance constraint ($N_d = 0$ across all cases). In particular, the number of graphs with acceptable edge counts decreases significantly for larger graphs; for instance, when $n = 100$, only 60 out of 500 generated graphs fall within the acceptable edge range.

Overall, the results demonstrate that the proposed GE$_d$ network is significantly more reliable for generating graphs under explicit structural constraints. Unlike the baseline generative models, the proposed method consistently produces valid graphs that simultaneously satisfy the vertex, edge, and edit distance requirements across all tested instances.

The input graphs and the resultant graphs obtained in these experiments are available at~\url{https://github.com/MGANN-KU/GraphGen\_ReLUNetworks}.
\begin{figure}[H]
	\centering
	\includegraphics[scale = 0.38]{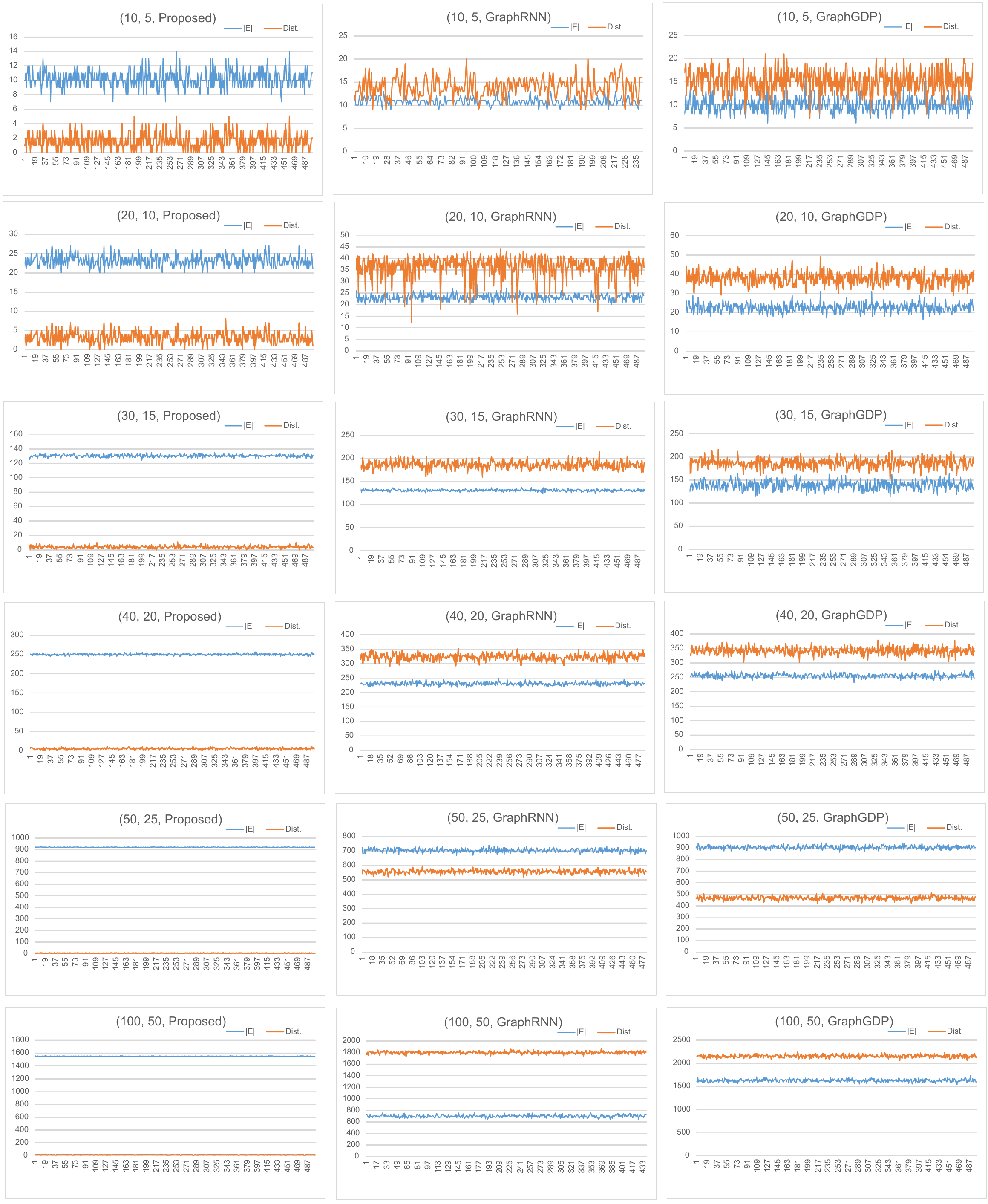}
	\caption{Plots of the number of edges $|E|$ (blue) and the graph edit distances (Dist. in orange) of the graphs with $n$ vertices generated by the proposed network,  GraphRNN~\cite{You2018GraphRNN} and GraphGDP~\cite{Huang2022GraphGDP} 
	for different pairs of $n$ and $d$.}\label{fig:valid_graph}
\end{figure}

\begin{table}[ht]
\centering
\setlength{\tabcolsep}{4pt}

\begin{tabular}{|c|c|c|c|c|c|c|c|c|c|c|c|}
\hline

\multirow{2}{*}{$n$} &
\multirow{2}{*}{$|E|$} &
\multirow{2}{*}{$d$}
& \multicolumn{3}{c|}{Proposed GE$_d$}
& \multicolumn{3}{c|}{GraphRNN~\cite{You2018GraphRNN}}
& \multicolumn{3}{c|}{GraphGDP~\cite{Huang2022GraphGDP}} \\
\cline{4-12}
& & 
& $N_{|E|}$
& $N_n$
& $N_d$
& $N_{|E|}$
& $N_n$
& $N_d$
& $N_{|E|}$
& $N_n$
& $N_d$ \\
\hline

10  & 9     & 5  & 500 & 500 &  500 & 441 & 240 & 0 & 498 & 500 & 0 \\
20  & 20    & 10 & 500 & 500 & 500 & 500 & 499 & 0 & 498 & 500 & 0 \\
30  & 130   & 15 & 500 & 500 & 500 & 500 & 499 & 0 & 369 & 500 & 0 \\
40  & 250   & 20 & 500 & 500 & 500 & 270 & 487 & 0 & 491 & 500 & 0 \\
50  & 918  & 25 & 500 & 500 & 500 & 0 & 483 & 0 & 394 & 500 & 0 \\
100 & 1536  & 50 & 500 & 500 & 500 & 0 & 438 & 0 & 60 & 500 & 0 \\
\hline
\end{tabular}

\caption{Comparison of the proposed network GE$_d$ with GraphRNN~\cite{You2018GraphRNN} and GraphGDP~\cite{Huang2022GraphGDP} models to generate valid graphs with the desired edit distance.}
\label{tab:comparison}
\end{table}
\section{Conclusion} \label{sec:concl}
In this study, we investigated the existence of ReLU-based generative networks capable of generating graphs similar to a given graph under the graph edit distance metric. In our framework, a vertex-labeled graph is represented by its label and adjacency matrices, which serve as both the input and output of the proposed architecture. We showed that graphs obtained through substitution, deletion, or insertion operations within a bounded edit distance can be generated by constant depth ReLU networks whose size scales as $\mathcal{O}(n^2 d)$. Furthermore, we established that, when these operations are combined, a constant depth ReLU network of size $\mathcal{O}(n^2 d)$ can generate any graph within edit distance $d$ from the input graph, providing a deterministic generative model for generating the desired graphs.

{The scalability experiments indicate that the running time of the proposed ${\rm GE}_d$ network increases with both the number of vertices $n$ and the edit distance $d$, with $d$ having a stronger impact on the computational cost. Nevertheless, the method successfully handles graphs with up to 1400 vertices for moderate edit distances up to 140 in a reasonable time, demonstrating good scalability for large graph instances. The comparative experiments show that the proposed GE$_d$ network consistently generates valid graphs, with all generated graphs satisfying the vertex, edge, and edit distance constraints for every tested configuration. In contrast, GraphRNN by You et al.~\cite{You2018GraphRNN} and GraphGDP by Huang et al.~\cite{Huang2022GraphGDP} fail to produce any graph satisfying the required edit distance in all cases, and their structural validity deteriorates for larger graphs (e.g., only 438/500 graphs preserve the vertex count and 60/500 satisfy the edge constraint when $n=100$). These results demonstrate the clear advantage of the proposed method for constrained graph generation.}

{Future research may explore several directions to further enhance the proposed approach. First, reducing the size or depth of the neural network architecture could improve computational efficiency and scalability while maintaining expressive power. Second, developing techniques that enable the uniform generation of graphs, ensuring that each feasible graph is produced with approximately equal probability, would be an important advancement. Finally, extending the framework to real-world graph generation tasks, such as social, biological, and communication networks, would help demonstrate its broader practical applicability.}

An implementation of the proposed networks is available at~\url{https://github.com/MGANN-KU/GraphGen\_ReLUNetworks}.

\section*{Author contributions}
Conceptualization, M.G. and T.A.; methodology, M.G. and T.A.; software, M.G.  validation, M.G.; formal analysis, M.G. and T.A.; investigation, M.G. data curation, M.G.; writing—original draft preparation, M.G.; writing—review and editing, M.G. and T.A..; supervision, T.A.; project administration, T.A. All authors have read and agreed to the published version of the manuscript.

\section*{Acknowledgments }
The work of Tatsuya Akutsu was supported in part by Grants 22H00532 and 22K19830 from Japan Society for the Promotion of Science
(JSPS), Japan. 
The authors would like to thank Dr. Naveed Ahmed Azam, Quaid-i-Azam University Pakistan, for the useful technical discussions.

\section*{Conflict of interest}
All authors have no conflicts of interest in this paper.

\bibliographystyle{unsrt}  
\bibliography{relu_ref}

\end{document}